\documentclass{article}
\usepackage{ijcai26}

\usepackage{times}
\usepackage{soul}
\usepackage{url}
\usepackage[hidelinks]{hyperref}
\usepackage[utf8]{inputenc}
\usepackage[small]{caption}
\usepackage{graphicx}
\usepackage{amsmath}
\usepackage{amsthm}
\usepackage{rotating}
\usepackage{float}
\usepackage{placeins}
\usepackage{amssymb}
\usepackage{booktabs}
\usepackage[linesnumbered,ruled,vlined]{algorithm2e}
\usepackage{algorithmic}
\usepackage[switch]{lineno}
\usepackage{multirow}

\usepackage{color}

\newtheorem{theorem}{Theorem}

\title{PerFlow: Physics-Embedded Rectified Flow for Efficient Reconstruction \\ and Uncertainty Quantification of Spatiotemporal Dynamics}

\author{
    Author Name
    \affiliations
    Affiliation
    \emails
    email@example.com
}

\author{
Hao Zhou$^1$
\and
Rui Zhang$^1$\thanks{Corresponding authors: Rui Zhang and Hao Sun.}\and
Han Wan$^{1}$\And
Hao Sun$^{1*}$\\
\affiliations
$^1$Gaoling School of Artificial Intelligence, Renmin University of China\\
\emails
\{haozhou, rayzhang, wanhan2001, haosun\}@ruc.edu.cn
}

\begin{document}

\maketitle

\begin{abstract}
Reconstructing PDE-governed fields from sparse and irregular measurements is challenging due to their ill-posed nature. Deterministic surrogates are trained on dense fields that struggle with limited measurements and uncertainty quantification. Generative models, by learning distributions over spatiotemporal fields, can better handle sparsity and uncertainty. However, existing generative approaches enforce data consistency and PDE constraints simultaneously via sampling-time gradient guidance, resulting in slow and unstable inference. To this end, we propose \textbf{PerFlow}, a \textbf{P}hysics-\textbf{e}mbedded \textbf{r}ectified \textbf{Flow} for efficient sparse reconstruction and uncertainty quantification of spatiotemporal dynamics. PerFlow decouples observation conditioning from physics enforcement, performing guidance-free conditioning by feeding observations into rectified-flow dynamics while embedding hard physics via a constraint-preserving projection (e.g., incompressibility or conservation). Theoretically, we establish invariance guarantees to ensure that trajectories remain on the physics-consistent manifold throughout sampling. Experiments on various PDE systems demonstrate competitive reconstruction accuracy with sound physics consistency, while enabling efficient conditional sampling (e.g., 50 steps) and up to 320x faster inference than 2000-step guided diffusion baselines.
\end{abstract}

\section{Introduction}

Partial differential equations (PDEs) govern various dynamical systems in science and engineering~\cite{holton2013introduction,tu2023computational,strogatz2024nonlinear}. While classical numerical solvers offer high accuracy and convergence guarantees, they require fine-grained spatiotemporal discretization, resulting in prohibitively high computational costs~\cite{lapidus1999numerical,tadmor2012review}. To address this issue, neural surrogate models learn the solution operator between function spaces, aiming for rapid inference after training~\cite{li2020fno,lu2021deeponet,zhang2025monte}. Most existing surrogate models are deterministic and trained on gridded inputs (e.g., complete initial conditions), yet real-world measurement data is often sparse and irregular, rendering field reconstruction an ill-posed and highly challenging problem~\cite{zhang2015sparse_survey}. Furthermore, deterministic methods only provide point estimates and lack uncertainty estimation capability.

Recent advances in generative models have enabled probabilistic surrogates for PDE systems. Diffusion~\cite{ho2020denoising,song2020score} and flow~\cite{lipman2022flow} models learn a probability distribution over spatiotemporal fields, where stochastic sampling naturally supports uncertainty quantification. For forward modeling, generative PDE surrogates can capture multi-scale spectral features and quantify uncertainty, improving robustness to noise and distribution shift~\cite{lippe2023pderefiner,shysheya2024conditional,li2025flowpde}. More importantly, for sparse reconstruction, they can support conditionally guided generation, offering multiple high-fidelity plausible fields consistent with limited measurements~\cite{huang2024diffusionpde,li2024s3gm}.

\begin{figure}
    \centering
    \includegraphics[width=1\linewidth]{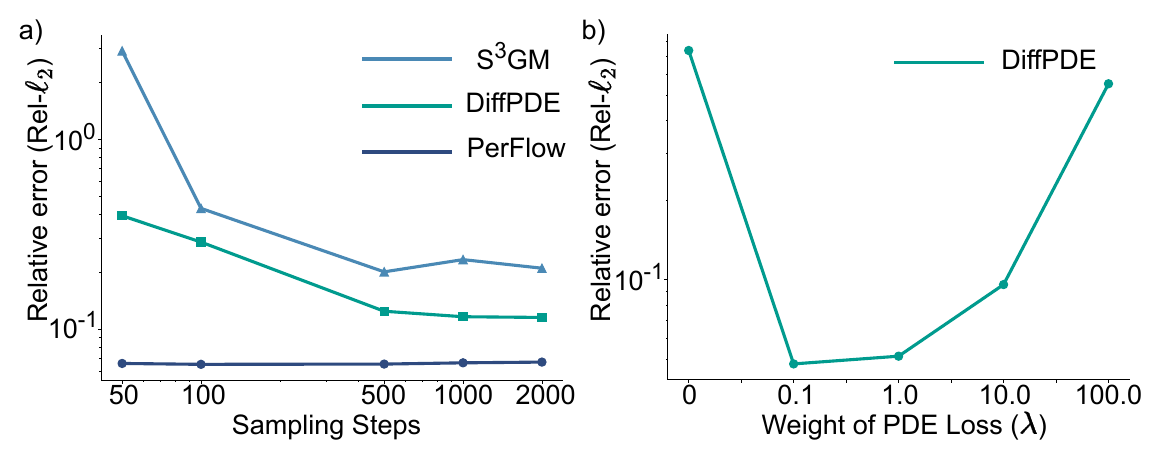}
    \caption{\textbf{(a)} Relative $\ell_2$ error vs.\ sampling steps on Darcy (S$^3$GM, DiffPDE, PerFlow). \textbf{(b)} DiffPDE sensitivity on Poisson: Relative $\ell_2$ error vs.\ PDE-loss weight $\lambda$.}
    \label{fig:poisson_weight}
\end{figure}

Many generative methods achieve controllable generation by enforcing data consistency and physics constraints during sampling via gradient-based guidance~\cite{li2024s3gm,wang2025fundiffdiffusionmodelsfunction}.
However, this introduces two major issues. Firstly, it requires calculating observation-consistency gradients and physics gradients at each step of the sampling process, resulting in the slow inference speeds~\cite{li2024s3gm}. Moreover, guidance tends to increase the required number of sampling steps and forces smaller step sizes for stable reconstructions. Secondly, it suffers from \emph{sensitivity to hyperparameters}, one must carefully balance data-consistency and physics-informed terms to conform the prior distribution and guidance information, where reconstruction quality is highly sensitive to these weights in practice~\cite{huang2024diffusionpde}. Fig.~\ref{fig:poisson_weight} illustrates these issues on two representative guided diffusion baselines (DiffPDE~\cite{huang2024diffusionpde} and S$^3$GM~\cite{li2024s3gm}).

To this end, we propose the \textbf{P}hysics-\textbf{e}mbedded \textbf{r}ectified \textbf{Flow} (\textbf{PerFlow}, shown in Fig.~\ref{fig:model}), an efficient conditional generative framework for sparse reconstruction and uncertainty quantification under partial observations. 
Unlike sampling-time guidance, PerFlow decouples observation conditioning from physics enforcement, avoiding gradient-based guidance at inference.
For observed information, PerFlow incorporates sparse measurements as conditions instead of conducting iterative data-fidelity gradients during sampling. 
For physics information, PerFlow encodes conservation laws through a constraint-preserving projection, which can ensure the entire generation remains on the physics-consistent manifold.
Finally, by adopting rectified flow, PerFlow supports fast few-step sampling, enabling efficient conditional sampling. 
In summary, we make the following contributions:

\noindent 1. We propose \textbf{PerFlow}, a physics-embedded rectified-flow framework for sparse-field reconstruction and uncertainty quantification, enabling guidance-free conditional sampling.

\noindent 2. We enforce hard physical constraints via constraint-preserving projection, and provide theoretical guarantees that sampling trajectories remain on the physics-consistent manifold throughout sampling.

\noindent 3. We evaluate the performance of PerFlow across various PDE benchmarks. PerFlow achieves competitive reconstruction accuracy with remarkably lower physics error, while providing $\sim$320$\times$ speedup over guided diffusion baselines.

\section{Related Work}

\subsection{Neural Surrogates for PDEs}
With the development of deep learning, neural surrogate models have emerged as a data-driven approach to accelerating PDE simulation~\cite{azizzadenesheli2024neural,zhang2024deciphering,bhaganagar2025accelerated}. The main idea of neural surrogates is using a neural network to learn the mapping between function space, such as from PDE initial conditions to the corresponding solution. Compared with classical numerical methods, neural surrogate models do not need fine-grained spatiotemporal discretization and thus achieve remarkable speedup. DeepONet and its variants~\cite{lu2021deeponet,HE2024117130} use branch and trunk networks to encode the physical fields and spatiotemporal coordinates, respectively. Fourier neural operator and its variants~\cite{li2020fno,rahman2022uno,xiong2024koopman,li2024pino,ijcai2025p862} conduct function transformation in the frequency domain to achieve discrete invariance. Moreover, graph-based models~\cite{eliasof2021gcn,zeng2024phympgn} and transformer-based models~\cite{li2023factformer,wu2024transolver} also extend neural surrogates to the irregular regime with their flexibility. However, most existing models use deterministic frameworks and conduct the training process across the complete fields, thus struggling with sparsity and uncertainty.

\subsection{Generative Modeling of PDE Systems}
Generative models offer a complementary approach to learning spatiotemporal systems by modeling \emph{distributions} over trajectories rather than producing a single point estimate, such as variational autoencoders~\cite{kingma2013vae}, normalizing flows~\cite{rezende2015normalizing_flow}, GANs~\cite{goodfellow2020gan}, and diffusion models~\cite{ho2020denoising,song2020ddim,lu2025dpmsolverplusplus} that sample via iterative denoising. Recently, continuous-time flow-based formulations, such as flow matching~\cite{lipman2022flow} and rectified flow~\cite{liu2022rectified}, learn a time-dependent velocity field and generate samples by integrating an ODE, reducing the number of function evaluations required for sampling. 
For physical systems, these models have been used for probabilistic forward simulation and for various downstream tasks~\cite{xu2022geodiff,du2024conditional,wang2025fundiffdiffusionmodelsfunction,hu2025wavelet}, including parameter inference, system control, and sparse reconstruction~\cite{wei2024diffphycon,li2024s3gm,huang2024diffusionpde,hu2025from}. A common paradigm is to model the joint distribution of the full spatiotemporal fields and incorporate measurements and physics information through sampling-time guidance~\cite{huang2024diffusionpde}. However, this framework requires iterative denoising with per-step backpropagation, leading to \emph{slow inference} and requiring more sampling steps or smaller step sizes for stable reconstructions. Moreover, these methods suffer from \emph{hyperparameter sensitivity}, since reconstruction quality depends strongly on the relative weights used to balance observation and physics terms.

\begin{figure*}[t!]
    \centering
    \includegraphics[width=\textwidth]{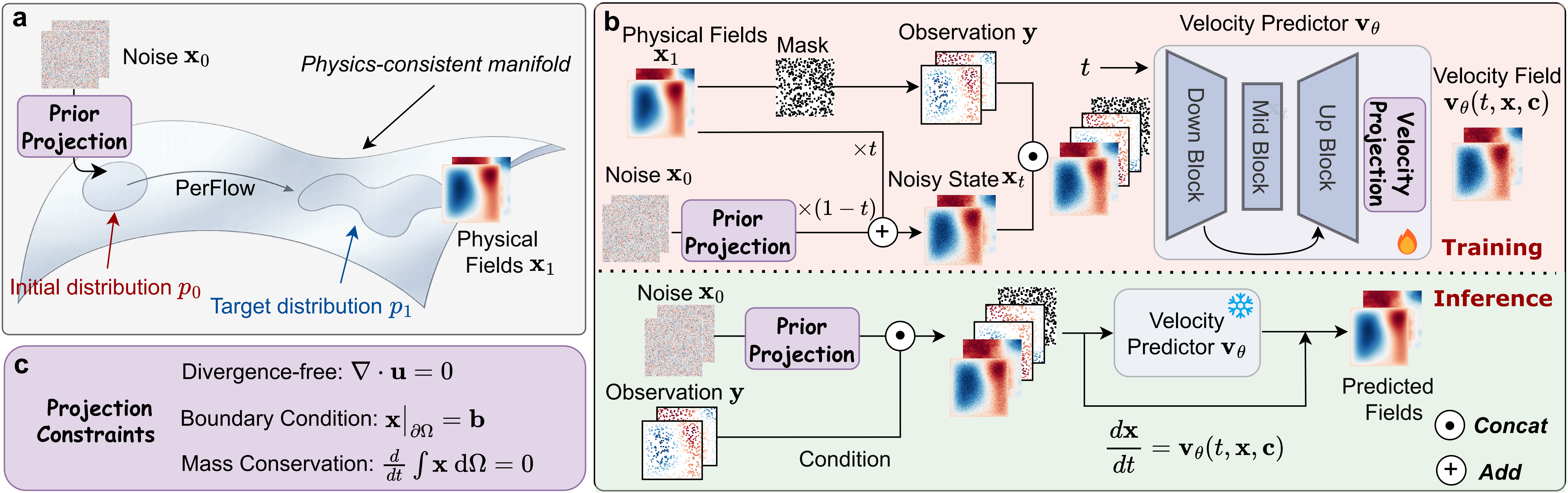}
    \caption{Overview of PerFlow.
    {
    \textbf{(a)} PerFlow learns a rectified-flow transport between initial and target distributions on the physics-consistent manifold.
    \textbf{(b)} Training and inference pipelines. During training, sparse observations are formed by masking ground-truth fields, while prior-projected noise is combined with the target field using weights $(1-t)$ and $t$ to construct $\mathbf{x}_t$. PerFlow learns a conditional velocity field $v_\theta(t,\mathbf{x},\mathbf{c})$, followed by velocity projection. During inference, sampled noise is prior-projected and evolved by integrating $\frac{d\mathbf{x}}{dt}=v_\theta(t,\mathbf{x},\mathbf{c})$. 
    \textbf{(c)} Projection constraints embedded in PerFlow, including divergence-free constraints, boundary conditions, and mass conservation.}
    }
\label{fig:model}
\end{figure*}

\section{Method}
\label{sec:method}

\subsection{Problem Setup}

We study sparse reconstruction and uncertainty quantification for PDE-governed physical fields. Let $\Omega \subset \mathbb{R}^d$ be a spatial domain and $[0,T]$ a time interval. A broad class of systems can be written as

\begin{equation}
    \mathcal{F}\left(\mathbf{x}; \boldsymbol{\theta}\right)=\mathbf{0},
    \quad \mathbf{x}\big|_{\partial\Omega}=\mathbf{b},
    \label{eq:pde_abstract}
\end{equation}
where $\mathbf{x}$ denotes the spatiotemporal field, $\boldsymbol{\theta}$ denotes PDE parameters, and $\mathbf{b}$ represents boundary conditions. In practice, only sparse and irregular measurements are available. We model sparse observations with a binary mask $\mathcal{M}$, where $\mathcal{M}_{i}=1$ indicates an observed entry and $\mathcal{M}_{i}=0$ otherwise:
\begin{equation}
    \mathbf{y} = \mathcal{M}\odot \mathbf{x}.
    \label{eq:obs_model}
\end{equation}
Starting from sparse measurements $\mathbf{y}$, PerFlow infers the complete field $\mathbf{x}$ and represents uncertainty through samples from the conditional distribution

\begin{equation}
p(\mathbf{x}\mid \mathbf{y},\mathcal{M}).
\label{eq:posterior}
\end{equation}

\subsection{Overview of PerFlow}\label{sec:overview}

Most posterior-guided diffusion/flow approaches enforce data consistency and physics constraints \emph{during sampling} via gradient-based guidance, resulting in slow inference and sensitivity to the guidance weights. To address these issues, PerFlow adopts rectified flow for efficient ODE-based sampling and decouples observation conditioning from physical guidance. PerFlow has the following key designs:

\noindent 1. \textbf{Rectified flow for few-step inference.}
Based on rectified flow, PerFlow samples by integrating a conditional ODE with a small number of function evaluations, enabling efficient reconstruction and uncertainty quantification.

\noindent 2. \textbf{Guidance-free conditioning.}
We regard sparse reconstruction as \emph{conditional generation} and explicitly provide the sparse measurements as conditions, avoiding sampling-time data-fidelity gradients.

\noindent 3. \textbf{Constraint-preserving projection.}
We use \emph{constraint-preserving projection} to ensure that trajectories remain on the physics-consistent manifold throughout sampling.

\subsection{Rectified Flow Preliminaries}
Rectified flow (RF)~\cite{liu2022rectified} learns a continuous-time transport that maps a simple base distribution to a target conditional data distribution.
Let $p_0$ be a base distribution and $p_1(\cdot \mid \mathbf{c})$ the target conditional distribution given $\mathbf{c}$.
RF learns a time-dependent velocity field predictor $\mathbf{v}_\theta(t,\mathbf{x},\mathbf{c})$ and generates samples by integrating
\begin{equation}
\frac{d\mathbf{x}}{dt}=\mathbf{v}_\theta(t,\mathbf{x},\mathbf{c}),\qquad t\in[0,1].
\label{eq:rf_ode}
\end{equation}

Training is based on flow matching along an interpolation path.
Given $\mathbf{x}_1\sim p_1(\cdot\mid \mathbf{c})$ and $\mathbf{x}_0\sim p_0$, we construct
\begin{equation}
\mathbf{x}_t = (1-t)\mathbf{x}_0 + t\mathbf{x}_1 ,
\qquad t\sim \mathcal{U}(0,1),
\label{eq:rf_path}
\end{equation}
and use the rectified-flow target velocity
\begin{equation}
\mathbf{v}^\star = \mathbf{x}_1-\mathbf{x}_0.
\label{eq:rf_target}
\end{equation}

Compared to diffusion-based sampling, RF admits efficient ODE solvers and typically requires fewer function evaluations, which is crucial for fast PDE reconstruction.

\subsection{Conditioning on Sparse Observations}
\label{sec:conditioning}
Unlike posterior-guided diffusion/flow methods that enforce observation consistency via sampling-time gradients, PerFlow casts sparse reconstruction as \emph{conditional generation}. The observation information is incorporated through the condition input, enabling guidance-free inference without per-step backpropagation.

Given sparse measurements $\mathbf{y}$ and mask $\mathcal{M}$, we use a mask-aware condition
\begin{equation}
\mathbf{c}= \mathrm{concat}\big(\mathbf{y},\,\mathcal{M}\big),
\label{eq:cond_def}
\end{equation}
where $\mathcal{M}$ is broadcast to match channels if needed.
We then learn a conditional velocity field $\mathbf{v}_\theta(t,\mathbf{x},\mathbf{c})$ parameterized by a U-Net backbone.
Training and inference procedures are detailed in Sec.~\ref{sec:train_sample}.

\subsection{Constraint-preserving Projection}
\label{sec:physics_manifold}
A central design of PerFlow is to restrict the rectified-flow dynamics to a \emph{physics-consistent manifold} that encodes hard PDE priors (Fig.~\ref{fig:model}\textbf{a}). Instead of injecting physics through gradient guidance, we propose the constraint-preserving projection to enforce essential constraints into the model architecture and sampling pipeline so that generated fields satisfy them by construction.

\paragraph{Where physics is embedded.}

PerFlow enforces hard constraints at two points:
{(i) the \emph{physics-embedded prior}, which is obtained by applying a \emph{prior projection} to the sampled noise, thereby yielding a physics-feasible initial state $\mathbf{x}(0)$;}
and
(ii) the \emph{constraint-preserving velocity projection}, where the learned velocity field is mapped to satisfy the corresponding constraints required by each ODE updates.
We implement these designs using a set of physics-embedded operators, 
{whose underlying physical constraints are summarized in Fig.~\ref{fig:model}\textbf{c} for clarity.}

\paragraph{(1) Divergence-free.}

For 2D incompressible flow, we enforce incompressibility with a stream-function construction, where we represent the velocity field $\mathbf{u}=(u,v)$ with a scalar stream function $\psi$:
\begin{equation}
u=\partial_y\psi,\qquad v=-\partial_x\psi,
\label{eq:streamfunction}
\end{equation}
which satisfies $\nabla\!\cdot\!\mathbf{u}=0$ by construction. For the \emph{physics-embedded prior}, we sample $\psi$ from a Gaussian random field, and project it to a divergence-free initial state $\mathbf{u}(0)=\nabla^\perp \psi$. For the \emph{constraint-preserving velocity projection}, the network predicts $\psi$, and we map it to $(u,v)$ to obtain divergence-free velocity updates.

\paragraph{(2) Dirichlet boundary conditions.}
Let $\mathcal{B}$ be a binary boundary indicator (1 on $\partial\Omega$, 0 otherwise) and $\mathbf{b}$ represents boundary conditions.
In detail, we replace the boundary values and keep interior entries unchanged to encode Dirichlet boundary conditions, which can be achieved via the operator $\Pi_{\mathrm{bc}}$ as follows:
\begin{equation}
\Pi_{\mathrm{bc}}(\tilde{\mathbf{x}})=\mathcal{B}\odot \mathbf{b} + (1-\mathcal{B})\odot \tilde{\mathbf{x}}.
\label{eq:dirichlet}
\end{equation}
During ODE integration, we also keep boundary values fixed by applying the same boundary mask to the velocity, i.e., we set boundary updates to zero using $(1-\mathcal{B})\odot(\cdot)$.

\paragraph{(3) Global conservation.}
For scalar conserved quantities, we enforce a fixed total mass using the mean-correction operator $\Pi_{\mathrm{mass}}$, which shifts the field to match the target mass:
\begin{equation}
\Pi_{\mathrm{mass}}(\tilde{\mathbf{x}})
=
\tilde{\mathbf{x}}
-\mathrm{mean}_{\Omega}(\tilde{\mathbf{x}})
+\frac{m_0}{|\Omega|},
\label{eq:massproj}
\end{equation}
where $m_0$ represents the total mass. For velocity updates, we apply the corresponding zero-mean correction so that the ODE updates do not change the total mass.
In practice, we apply $\Pi_{\mathrm{mass}}$ when sampling the physics-embedded prior, and apply the mean-correction to the predicted velocity field whenever this constraint is required.

By encoding these operators into both the physics-embedded prior and the constraint-preserving velocity projection, PerFlow {starts from a physics-feasible state and} performs rectified-flow sampling directly on the physics-consistent manifold without sampling-time physics guidance. Furthermore, we provide theoretical guarantees for the continuous-time dynamics in Sec.~\ref{sec:theory}, where we prove that the physical constraint satisfaction can be maintained throughout sampling.

\subsection{Training Objective and Conditional Sampling}
\label{sec:train_sample}
We introduce PerFlow's training and inference procedures with sparse observations (Fig.~\ref{fig:model}\textbf{b}). 
{The complete algorithmic procedure is provided in Appendix~\ref{sec:app_algorithm}.}
\paragraph{Training (Conditional RF with physics embedding).}
During training, we take a physical field $\mathbf{x}_1$ from the training set and sample a masking $\mathcal{M}$ to simulate sparse measurements $\mathbf{y}=\mathcal{M}\odot \mathbf{x}_1$, thus forming the condition $\mathbf{c}=\mathrm{concat}(\mathbf{y},\mathcal{M})$.
After that, we sample a random noise $\tilde{\mathbf{x}}_0$ from an initial distribution $p_0$ and project it to a physics-consistent initial state $\mathbf{x}_0=\Pi(\tilde{\mathbf{x}}_0)$ using the projection operators in Sec.~\ref{sec:physics_manifold}. We then construct $\mathbf{x}_t$ and $\mathbf{v}^\star$ as in Eqs.~\ref{eq:rf_path}-\ref{eq:rf_target}, and minimize
\begin{equation}
\mathcal{L}(\theta)
=
\mathbb{E}\Big[\big\|
\mathbf{v}_\theta(t,\mathbf{x}_t,\mathbf{c})-\mathbf{v}^\star
\big\|_2^2\Big].
\label{eq:training_loss}
\end{equation}

In our implementation, $\mathbf{v}_\theta$ is made physics-consistent by constraint-preserving projection (Sec.~\ref{sec:physics_manifold}), thus the learned dynamics is restricted to the physics-consistent manifold automatically.

\paragraph{Inference (Sparse reconstruction and uncertainty quantification).}

Given sparse observations $(\mathbf{y},\mathcal{M})$, we set $\mathbf{c}=\mathrm{concat}(\mathbf{y},\mathcal{M})$ to form a condition. We sample $K$ i.i.d. physics-consistent initial states and integrate the conditional ODE in Eq.~\ref{eq:rf_ode} from $t=0$ to $1$ using a few-step solver to obtain samples $\hat{\mathbf{x}}^{(k)}=\mathbf{x}^{(k)}(1)$. After that, we can estimate the uncertainty from the ensemble $\{\hat{\mathbf{x}}^{(k)}\}_{k=1}^K$ via posterior mean and variance.

\section{Theory} \label{sec:theory} 
This section states theoretical guarantees for PerFlow's hard-constraint embedding. After discretization, many physics priors (e.g., conservation and boundary conditions) can be written as affine linear constraints \begin{equation} 
\mathcal{S} \;=\; \{\mathbf{x}\in\mathbb{R}^d:\; A\mathbf{x}= \mathbf{p}\}, \label{eq:theory_S} \end{equation} 
where $A$ stacks the corresponding constraint operators and $\mathbf{p}$ encodes prescribed values. 
PerFlow generates samples by integrating the conditional ODE 
\begin{equation} 
\frac{d\mathbf{x}}{dt}=\mathbf{v}_\theta(t,\mathbf{x},\mathbf{c}), \label{eq:theory_ode} 
\end{equation} 
where $\mathbf{c}$ is the observation condition. 
{By design, PerFlow obtains a feasible initial state $\mathbf{x}(0)\in\mathcal{S}$ via prior projection and uses a constraint-preserving velocity field (Sec.~\ref{sec:physics_manifold}). The following theorem formalizes the resulting trajectory-level constraint preservation.}
\begin{theorem}[Constraint invariance of continuous dynamics] \label{thm:continuous_invariance_short} 
Assume the feasible set is $\mathcal{S}=\{\mathbf{x}:A\mathbf{x}=\mathbf{p}\}$. Assume $\mathbf{v}_\theta(t,\mathbf{x},\mathbf{c})$ is continuous in $t$ and locally Lipschitz in $\mathbf{x}$ so that the ODE~\eqref{eq:theory_ode} admits a unique solution on $[0,1]$. 
If (i) $\mathbf{x}(0)\in\mathcal{S}$ and (ii) the velocity field is tangent to $\mathcal{S}$, i.e., \begin{equation} 
A\,\mathbf{v}_\theta(t,\mathbf{x},\mathbf{c})=\mathbf{0} \quad \text{for all } t\in[0,1]\ \text{and all }\mathbf{x}\in\mathcal{S}, \label{eq:tangent_condition_short} 
\end{equation} 
then the solution satisfies $\mathbf{x}(t)\in\mathcal{S}$ for all $t\in[0,1]$. 
\end{theorem}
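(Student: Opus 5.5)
The subtlety is that the tangency condition \eqref{eq:tangent_condition_short} only holds for $\mathbf{x}\in\mathcal{S}$. So the naive computation $\frac{d}{dt}(A\mathbf{x}(t)-\mathbf{p}) = A\mathbf{v}_\theta(t,\mathbf{x}(t),\mathbf{c}) = \mathbf{0}$ is circular: it presupposes $\mathbf{x}(t)\in\mathcal{S}$. I will avoid this by uniqueness. First construct a solution that lives in $\mathcal{S}$ by design, then show it coincides with $\mathbf{x}(t)$.

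Step one is to parametrize $\mathcal{S}$. Since $\mathbf{x}(0)\in\mathcal{S}$, the set is nonempty and we can write $\mathcal{S}=\mathbf{x}(0)+\ker A$. Choose the orthogonal projector $P$ onto $\ker A$, so $P=I-A^{+}A$, and note that $AP=0$. Define the modified field $\mathbf{w}(t,\mathbf{x}) := P\,\mathbf{v}_\theta(t,\,\mathbf{x}(0)+P(\mathbf{x}-\mathbf{x}(0)),\,\mathbf{c})$. Because $P$ is linear and bounded, $\mathbf{w}$ inherits continuity in $t$ and local Lipschitz continuity in $\mathbf{x}$. Consider the ODE $\dot{\mathbf{z}}=\mathbf{w}(t,\mathbf{z})$ with $\mathbf{z}(0)=\mathbf{x}(0)$. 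It has $A\dot{\mathbf{z}}=AP(\cdot)=\mathbf{0}$ identically in $\mathbf{z}$, so $A\mathbf{z}(t)=A\mathbf{x}(0)=\mathbf{p}$, and hence $\mathbf{z}(t)\in\mathcal{S}$ on its whole interval of existence.

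Step two is to show that $\mathbf{z}$ also solves the original ODE. For $\mathbf{z}\in\mathcal{S}$ we have $P(\mathbf{z}-\mathbf{x}(0))=\mathbf{z}-\mathbf{x}(0)$, so the argument of $\mathbf{v}_\theta$ is just $\mathbf{z}$. By the tangency assumption, $\mathbf{v}_\theta(t,\mathbf{z},\mathbf{c})\in\ker A$, so $P\mathbf{v}_\theta=\mathbf{v}_\theta$. Hence $\dot{\mathbf{z}}=\mathbf{v}_\theta(t,\mathbf{z},\mathbf{c})$, which is \eqref{eq:theory_ode}, with the same initial condition. By the assumed uniqueness, $\mathbf{z}=\mathbf{x}$ wherever both exist, and therefore $\mathbf{x}(t)\in\mathcal{S}$.

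The main obstacle is ensuring that $\mathbf{z}$ exists on all of $[0,1]$, since local Lipschitz continuity alone only gives local existence. I would handle this with a continuation argument. Let $T^\ast$ be the supremum of times up to which $\mathbf{z}$ exists and agrees with $\mathbf{x}$. On $[0,T^\ast)$ the trajectory $\mathbf{z}=\mathbf{x}$ stays in the compact set $\mathbf{x}([0,1])$, which is bounded because $\mathbf{x}$ is continuous on $[0,1]$. So $\mathbf{z}$ cannot blow up, and it extends past $T^\ast$ if $T^\ast<1$. This contradicts maximality, giving $T^\ast=1$. Finally, $\mathcal{S}$ is closed, so $\mathbf{x}(1)\in\mathcal{S}$ as well.
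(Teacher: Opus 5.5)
Your proof is correct and follows the paper's argument. You build an auxiliary ODE whose velocity is forced into $\ker A$: you use the ambient projector $P=I-A^{+}A$, while the paper uses reduced coordinates $\mathbf{x}_\mathrm{p}+N\mathbf{z}$ with a left inverse $N^\dagger$. You then observe that, by tangency, its $\mathcal{S}$-valued solution also solves the original ODE, and conclude by uniqueness plus a supremum/continuation argument. The only cosmetic difference is that you start the auxiliary ODE at $t=0$ and extend it using boundedness of $\mathbf{x}([0,1])$, whereas the paper restarts the local reduced ODE at the supremum $T^\star$ of feasible times and derives a contradiction.
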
 
In PerFlow, we parameterize the velocity field in a constrained form (e.g., stream-function for incompressibility) or use an exact projection to the network output to enforce the corresponding constraints (mean correction for mass conservation).
As a result, for any feasible state $\mathbf{x}\in\mathcal{S}$, the effective velocity satisfies $A\,\mathbf{v}_\theta(t,\mathbf{x},\mathbf{c})=\mathbf{0}$, which is exactly the tangency condition in Eq.~\ref{eq:tangent_condition_short}. 
{Therefore, with prior-projected initialization and constraint-preserving velocity projection, physical constraint satisfaction can be maintained throughout the entire sampling (detailed in Appendix~\ref{sec:app_theory}).}
{This trajectory-level guarantee differs from sampling-time guidance, which encourages physical consistency through additional correction terms during sampling but does not enforce feasibility for every ODE update. 
For nonlinear manifolds $\mathcal{M}=\{\mathbf{x}:g(\mathbf{x})=0\}$, trajectory invariance would require $J_g(\mathbf{x})\mathbf{v}_\theta(t,\mathbf{x},\mathbf{c})=0$, where $J_g(\mathbf{x})$ is the Jacobian of $g$ at $\mathbf{x}$; we leave this extension for future work.}

\begin{table*}[!t]
\centering
\resizebox{\textwidth}{!}{
\begin{tabular}{ll|cccccc|cc}
\toprule
PDE & Metric
& UNet & FNO & FFNO & DiffPDE-50 & S$^3$GM-50 & PerFlow & DiffPDE-2000 & S$^3$GM-2000 \\
\midrule

\multirow{4}{*}{Burgers}
& Rel-$\ell_2$ & $1.86 \times 10^{-1}$ & $\underline{1.23 \times 10^{-1}}$ & $1.93 \times 10^{-1}$ & $1.45 \times 10^{0}$ & $6.99 \times 10^{0}$ & $\mathbf{8.63 \times 10^{-2}}$ & $9.39 \times 10^{-2}$ & $1.87 \times 10^{-1}$ \\
& Rel-$\ell_1$ & $1.33 \times 10^{-1}$ & $\underline{8.09 \times 10^{-2}}$ & $1.49 \times 10^{-1}$ & $1.47 \times 10^{0}$ & $6.81 \times 10^{0}$ & $\mathbf{6.84 \times 10^{-2}}$ & $5.92 \times 10^{-2}$ & $1.60 \times 10^{-1}$ \\
& Phys-Err      & $2.03 \times 10^{-4}$ & $\underline{4.62 \times 10^{-5}}$ & $4.23 \times 10^{-3}$ & $1.19 \times 10^{-4}$ & $7.49 \times 10^{-3}$ & $\mathbf{4.47 \times 10^{-11}}$ & $1.53 \times 10^{-4}$ & $1.04 \times 10^{-4}$ \\
& Time     & 35.88   & 29.02   & 29.97   & 4.47    & \underline{3.33}    & \textbf{0.79}    & 191.90     & 145.60 \\
\midrule

\multirow{4}{*}{Poisson}
& Rel-$\ell_2$ & $2.46 \times 10^{-1}$ & $\underline{1.66 \times 10^{-1}}$ & $3.66 \times 10^{-1}$ & $9.16 \times 10^{-1}$ & $3.90 \times 10^{1}$ & $\mathbf{5.02 \times 10^{-2}}$ & $5.12 \times 10^{-2}$ & $7.70 \times 10^{-2}$ \\
& Rel-$\ell_1$ & $1.97 \times 10^{-1}$ & $\underline{1.34 \times 10^{-1}}$ & $3.23 \times 10^{-1}$ & $9.23 \times 10^{-1}$ & $3.96 \times 10^{1}$ & $\mathbf{4.63 \times 10^{-2}}$ & $4.49 \times 10^{-2}$ & $7.11 \times 10^{-2}$ \\
& Phys-Err      & $1.13 \times 10^{-2}$ & $4.13 \times 10^{-2}$ & $2.23 \times 10^{-2}$ & $\underline{8.37 \times 10^{-3}}$ & $6.88 \times 10^{1}$ & $\mathbf{0.00 \times 10^{0}}$ & $7.21 \times 10^{-2}$ & $6.21 \times 10^{-2}$ \\
& Time     & 35.69   & 29.13   & 30.97   & 5.58    & \underline{3.70}    & \textbf{0.82}    & 261.40     & 147.30 \\
\midrule

\multirow{4}{*}{Darcy}
& Rel-$\ell_2$ & $2.43 \times 10^{-1}$ & $\underline{1.71 \times 10^{-1}}$ & $2.37 \times 10^{-1}$ & $3.95 \times 10^{-1}$ & $2.92 \times 10^{0}$ & $\mathbf{6.58 \times 10^{-2}}$ & $1.15 \times 10^{-1}$ & $2.09 \times 10^{-1}$ \\
& Rel-$\ell_1$ & $1.27 \times 10^{-1}$ & $\underline{6.66 \times 10^{-2}}$ & $1.23 \times 10^{-1}$ & $2.99 \times 10^{-1}$ & $2.72 \times 10^{0}$ & $\mathbf{1.13 \times 10^{-2}}$ & $3.33 \times 10^{-2}$ & $3.87 \times 10^{-2}$ \\
& Phys-Err      & $3.39 \times 10^{-2}$ & $\underline{1.15 \times 10^{-2}}$ & $8.08 \times 10^{-2}$ & $1.24 \times 10^{-1}$ & $1.65 \times 10^{1}$ & $\mathbf{1.05 \times 10^{-7}}$ & $9.45 \times 10^{-2}$ & $3.95 \times 10^{-1}$ \\
& Time    & 36.55   & 29.51   & 31.49   & 5.85    & \underline{3.98}    & \textbf{0.81}    & 257.90     & 154.20 \\
\midrule

\multirow{4}{*}{NS}
& Rel-$\ell_2$ & $4.30 \times 10^{-1}$ & $4.64 \times 10^{-1}$ & $\underline{3.68 \times 10^{-1}}$ & $1.14 \times 10^{0}$ & $6.66 \times 10^{1}$ & $\mathbf{2.22 \times 10^{-2}}$ & $3.17 \times 10^{-3}$ & $1.26 \times 10^{-1}$ \\
& Rel-$\ell_1$ & $2.57 \times 10^{-1}$ & $4.03 \times 10^{-1}$ & $\underline{2.52 \times 10^{-1}}$ & $1.15 \times 10^{0}$ & $6.33 \times 10^{1}$ & $\mathbf{1.76 \times 10^{-2}}$ & $2.15 \times 10^{-3}$ & $1.20 \times 10^{-1}$ \\
& Phys-Err      & $9.83 \times 10^{0}$ & $1.06 \times 10^{1}$ & $1.39 \times 10^{1}$ & $\underline{2.62 \times 10^{-3}}$ & $6.09 \times 10^{5}$ & $\mathbf{8.04 \times 10^{-10}}$ & $2.50 \times 10^{-4}$ & $2.16 \times 10^{0}$ \\
& Time     & 37.55   & 35.56   & 55.07   & 7.67    & \underline{3.61}    & \textbf{1.20}    & 292.70     & 155.60 \\
\bottomrule
\end{tabular}
}
\caption{Quantitative comparison on different PDEs, including relative $\ell_2$ error (Rel-$\ell_2$), relative $\ell_1$ error (Rel-$\ell_1$), physics error (Phys-Err), and inference time (s). We evaluate S$^3$GM and DiffPDE with a few-step sampling (50 steps) and a long-run setting (2000 steps), denoted as S$^3$GM-50 (2000) and DiffPDE-50 (2000). PerFlow is evaluated with 50 sampling steps. Best results are in bold and second best are underlined (2000-step baselines are excluded from ranking).}
\label{tab:main_table}
\end{table*}

\begin{figure*}[t!]
    \centering
    \includegraphics[width=\textwidth]{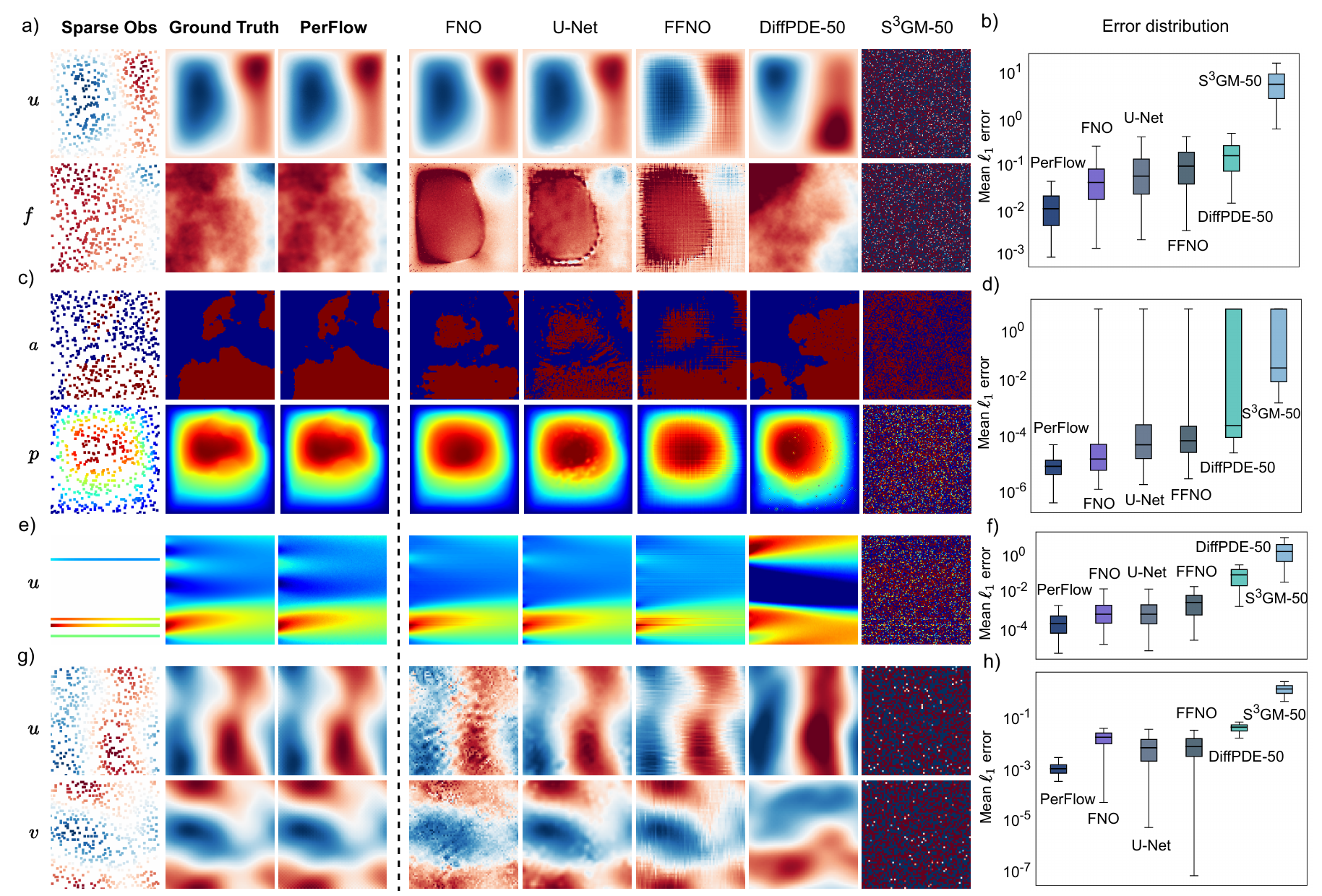}
    \caption{Sparse reconstruction physical fields (left) and the distribution of the point-wise mean $\ell_1$ error (right) on four PDE benchmarks for each methods: \textbf{(a-b)} Poisson (solution and forcing), \textbf{(c-d)} Darcy (coefficient and pressure), \textbf{(e-f)} Burgers, and \textbf{(g-h)} Navier--Stokes equation (velocity components $u$ and $v$).}
    \label{fig:main_plot}
\end{figure*}

\section{Results}\label{sec:results}

\subsection{Benchmarks, Baselines, and Metrics}\label{sec:results_setup}

\paragraph{Benchmarks.}

We evaluate PerFlow on four kinds of PDE systems, including: (i) 1D \emph{Burgers} with periodic boundary conditions, (ii) 2D \emph{Poisson} with Dirichlet boundary conditions, (iii) 2D \emph{Darcy} with Dirichlet boundary conditions, and (iv) 2D \emph{incompressible Navier--Stokes} (NS) on a periodic domain. The default discretizations and measurements are as follows: Burgers uses $128$ time steps on a $128$-point spatial grid with $k=5$ observed locations; Poisson and Darcy use $128 \times 128$ grids with $k=500$ observed points; NS uses 2D velocity fields $(u,v)$ on a $64 \times 64$ grid over $T=10$ frames, with $k=200$ observed locations per frame. Observation masks are randomly sampled. Additional dataset details are provided in Appendix~\ref{sec:dataset_info} and Appendix~\ref{sec:data_usage}.

\paragraph{Baselines.}

We compare PerFlow with (i) deterministic surrogates and (ii) guided generative baselines. For deterministic models, we train \textbf{FNO}~\cite{li2020fno}, \textbf{U-Net}~\cite{Ronneberger2015unet}, and \textbf{FFNO}~\cite{tran2023factorized} as \emph{surrogate solvers}, learning the forward mapping between \emph{full fields} governed by the PDE. For sparse reconstruction, we randomly initialize the full fields as input variables and iteratively optimize them by minimizing the mismatch between the model's output and the sparse observations. The reported inference time includes the entire optimization process. 
Furthermore, we also compare PerFlow with conditional FNO, which learns a deterministic mapping from sparse observations to the full field directly, and the corresponding results are reported in Appendix~\ref{sec:fno_condition}.

For generative baselines, we evaluate S$^3$GM~\cite{li2024s3gm} and DiffPDE~\cite{huang2024diffusionpde}, each with a few-step sampling (50 steps) and a long-run setting following the sampling steps used in the original papers (2000 steps), denoted as S$^3$GM-50 (2000) and DiffPDE-50 (2000), respectively.
{These two settings highlight the accuracy--efficiency trade-off: 2000 steps reflect the high-budget performance of guided baselines, while 50 steps compare them with PerFlow under a similar inference budget.
Implementation details and baseline descriptions are provided in Appendix~\ref{sec:training_details} and Appendix~\ref{sec:baseline_models}.}
{\textbf{The code will be released at \url{https://github.com/intell-sci-comput/PerFlow}.}}

\paragraph{Metrics.} In this paper, we report reconstructed accuracy via relative errors, including $\text{Rel-}\ell_2=\|\hat{\mathbf{x}}-\mathbf{x}\|_2/\|\mathbf{x}\|_2$ and 
$\text{Rel-}\ell_1=\|\hat{\mathbf{x}}-\mathbf{x}\|_1/\|\mathbf{x}\|_1$.
We also report the physics error (Phys-Err) for each PDE, such as divergence for NS and mass conservation for Burgers.
Full definitions, including component averaging across variables and the PDE-specific Phys-Err computation, are provided in Appendix~\ref{sec:eval_metrics}.

\subsection{Sparse Reconstruction of Physical Fields}\label{sec:main_results}

Fig.~\ref{fig:main_plot} shows the sparse reconstruction results for each PDE. Among these methods, deterministic models often offer over-smoothed reconstructed results and introduce artifacts in unobserved regions. Meanwhile, guided diffusion baselines, like DiffPDE and S$^3$GM, can recover several detail patterns and improve observation consistency when sufficient sampling steps are taken, but still struggle in the few-step regime.

Table~\ref{tab:main_table} summarizes the quantitative results across 20 test cases for each PDE. PerFlow significantly outperforms baselines in reconstruction accuracy while strictly enforcing physical constraints. Compared to deterministic surrogates, PerFlow reduces Rel-$\ell_2$ errors by \emph{29.8\%--94.0\%} across the four benchmarks. This is because solving the inverse problem via iterative optimization with a deterministic model is ill-posed and sensitive to initialization, resulting in higher errors and prolonged inference times (29--55s). Moreover, compared with conditional FNO, PerFlow still achieves a remarkably high accuracy due to its ability to handle ill-posed scenarios (Appendix~\ref{sec:fno_condition}). 

When compared with generative models under the same 50-step budget, PerFlow improves reconstruction accuracy by \emph{1--2 orders of magnitude} over DiffPDE-50 and S$^3$GM-50. Simultaneously, PerFlow is \emph{5.7--6.4$\times$} faster than DiffPDE-50 and \emph{3.0--4.2$\times$} faster than S$^3$GM-50.
This efficiency stems from PerFlow's design. Guided baselines require per-step backpropagation for observation and physics guidance, while PerFlow performs guidance-free sampling with lightweight projections. 
{For example, on Burgers, mass projection introduces negligible overhead, with $0.79\pm0.12$s using projection versus $0.76\pm0.06$s without it.}
Even when the guided baselines are extended to 2000 sampling steps, PerFlow maintains superior accuracy on three of the four benchmarks while achieving significantly strong physics consistency. Notably, PerFlow provides up to a $\sim$320$\times$ speedup over DiffPDE-2000. In summary, PerFlow can not only offer high fidelity comparable to converged diffusion models, but also enjoy fast inference speed due to few-step solvers. 
Additional robustness analyses on base distributions and noisy observations are provided in the Appendix~\ref{sec:app_robustness}.
Moreover, long-run guided baselines and varying-observation results are reported in Appendix~\ref{sec:supp_main_results} and Appendix~\ref{sec:more_obs_results}.

\begin{figure*}[!t]
    \centering
    \includegraphics[width=\textwidth]{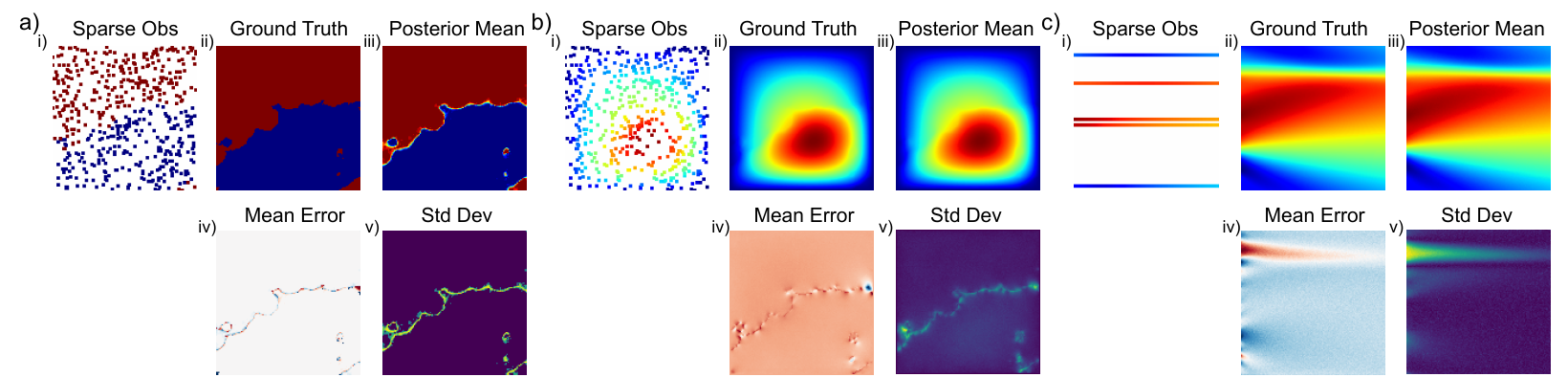}
    \caption{Uncertainty quantification under sparse observations, including Darcy coefficient $a(x)$ (\textbf{a}), Darcy pressure $p$ (\textbf{b}), and 1D Burgers $u$ (\textbf{c}). In each subfigure, panels (i)--(v) correspond to the sparse observations, the ground truth, the predicted mean value, the mean error, and the standard deviation over 20 samples.}
    \label{fig:uq}
\end{figure*}

\subsection{Uncertainty Quantification}\label{sec:uq}
A main advantage of generative modeling is its ability to quantify posterior uncertainty under sparse observations. To verify this, we evaluate this capability on the Darcy and Burgers benchmarks, where the inverse problem is inherently ill-posed due to sparse and irregular measurements. Armed with the generative nature of PerFlow, we generate an ensemble of 20 independent predictions for each test case by sampling distinct initial noise states. We then calculate the predicted mean as the final reconstructed field and the entry-wise standard deviation across the ensemble to quantify the model's uncertainty. Fig.~\ref{fig:uq} shows the uncertainty quantification results. In all cases, the predictive mean accurately recovers the ground truth structures. Furthermore, we observe a strong spatial correlation between the estimated uncertainty and the actual reconstruction error. Regions exhibiting high variance typically coincide with larger deviations from the ground truth. For instance, uncertainty peaks at the sharp phase boundaries in the Darcy coefficient (Fig.~\ref{fig:uq}\textbf{a}) and concentrates in the gaps between temporal snapshots for the Burgers equation (Fig.~\ref{fig:uq}\textbf{c}). This alignment indicates that PerFlow effectively identifies challenging regions where the solution is less confident, serving as a reliable indicator of reconstruction errors.

\begin{figure}[!t]
    \centering
    \includegraphics[width=\linewidth]{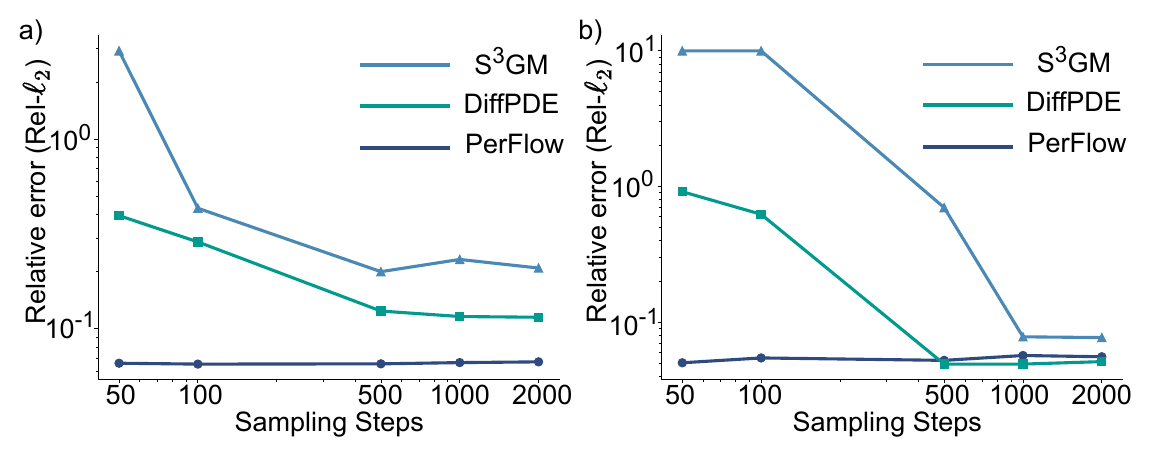}
    \caption{Reconstruction error versus sampling steps on (\textbf{a}) 2D Darcy and (\textbf{b}) 2D Poisson. We report the Rel-$\ell_2$ error versus the number of sampling steps for S$^3$GM, DiffPDE, and PerFlow.}
    \label{fig:darcy_steps}
\end{figure}

\subsection{Few-step Sampling}\label{sec:steps}
We study how reconstruction accuracy varies with the number of sampling steps for S$^3$GM, DiffPDE, and PerFlow (Fig.~\ref{fig:darcy_steps}). Across these PDEs, PerFlow achieves strong accuracy in the few-step sampling regime and remains stable as the number of steps increases. In contrast, other guided diffusion methods exhibit a much slower and more severe trade-off between accuracy and the number of steps, because of the need to balance data-fidelity guidance and physics guidance. PerFlow addresses this issue via avoiding sampling-time backpropagation and embedding physics as hard constraints, thereby achieving efficient and accurate reconstruction.

\begin{table}[!t]
\centering
\resizebox{\linewidth}{!}{
\begin{tabular}{l|ccc}
\toprule
Method & $\mathrm{Rel}\text{-}\ell_2$ & $\mathrm{Rel}\text{-}\ell_1$ & Phys-Err \\
\midrule
PerFlow            & $\mathbf{8.63\times10^{-2}}$ & $\underline{6.84\times10^{-2}}$ & $\mathbf{4.47\times10^{-11}}$ \\
RF-50              & $\underline{8.79\times10^{-2}}$ & $7.16\times10^{-2}$ & $\underline{7.09\times10^{-5}}$  \\
FM-50              & $1.48\times10^{-1}$ & $1.40\times10^{-1}$ & $1.71\times10^{-4}$  \\
DDIM-50            & $1.14\times10^{-1}$ & $9.92\times10^{-2}$ & $1.02\times10^{-4}$  \\
DDPM-50            & $6.86\times10^{0}$  & $7.33\times10^{0}$  & $1.25\times10^{-4}$  \\
DDPM-1000          & $9.10\times10^{-2}$ & $\mathbf{6.46\times10^{-2}}$ & $1.10\times10^{-4}$  \\
\bottomrule
\end{tabular}
}
\caption{Ablation studies across different generative samplers and flow variants on Burgers equation, including relative $\ell_2$ error (Rel-$\ell_2$), relative $\ell_1$ error (Rel-$\ell_1$), physics error (Phys-Err). Best results are in bold and second best are underlined.}
\label{tab:ablation}
\end{table}

\subsection{Ablation Studies}\label{sec:ablation}
We conduct ablations on 1D Burgers equation to study (i) the effect of \emph{physics embedding} and (ii) the choice of \emph{generative procedure}. Table~\ref{tab:ablation} compares PerFlow with several variants, including \emph{RF-50}, which removes the hard constraint projector from PerFlow; \emph{FM-50}, which replaces rectified flow with a standard flow-matching model under the same 50-step sampling~\cite{lipman2022flow}; \emph{DDIM-50}~\cite{song2020ddim} and \emph{DDPM-50}~\cite{ho2020denoising}, two diffusion baselines sampled with 50 steps, and \emph{DDPM-1000} with 1000 steps.

Comparing PerFlow with RF-50 reveals the contribution of hard constraint embedding. While both methods achieve similar reconstruction accuracy, PerFlow reduces physics error by several orders of magnitude (Table~\ref{tab:ablation}), indicating that constraint-preserving projection is crucial for producing physically consistent reconstructions. Furthermore, RF-based generation in PerFlow is more effective than other diffusion models with the same number of sampling (Table~\ref{tab:ablation}). DDPM-50 fails catastrophically, whereas DDIM-50 improves substantially but remains inferior to PerFlow in both accuracy and physical consistency. Although DDPM-1000 can achieve competitive Rel-$\ell_2$/$\ell_1$, it still exhibits larger physics error and requires an order-of-magnitude more sampling steps.

\section{Conclusion and Future Work}
We propose PerFlow, a physics-embedded rectified-flow framework for sparse reconstruction and uncertainty quantification in PDE-governed systems. Compared with posterior-guided generative methods, PerFlow decouples observation conditioning from physics enforcement by performing guidance-free observation conditioning in rectified-flow dynamics and encoding hard physics via constraint-preserving projection. Theoretically, sampling trajectories are guaranteed to remain on the physics-consistent manifold. Numerically, PerFlow achieves competitive accuracy and strong physics consistency across various PDE datasets, while enabling efficient conditional sampling (e.g., 50 steps) and up to $\sim$320$\times$ faster inference than 2000-step diffusion baselines.

There are two future works that we are interested in. Firstly, we can combine PerFlow with sensor placement approaches, which can maximize reconstruction quality and uncertainty reduction~\cite{ma2025physense}. Secondly, we can extend PerFlow to other downstream tasks, because fast conditional sampling on a physics-consistent manifold can provide reliable control and planning~\cite{hu2025from}.

\section*{Acknowledgements}
The work is supported by the Beijing Natural Science Foundation (No. F261002) and the National Natural Science Foundation of China (No. 62276269 and No. 62506367). R.Z. acknowledges support from the China Postdoctoral Science Foundation under Grant Number 2025M771582 and the Postdoctoral Fellowship Program of CPSF under Grant Number GZB20250408. The full version is available at \url{http://arxiv.org/abs/2605.03548}.

\bibliographystyle{named}
\bibliography{ijcai26}

\clearpage 
\appendix
\setcounter{section}{0}
\renewcommand{\thesection}{\Alph{section}}

\onecolumn
\vspace*{0.5em}
\begin{center}
{\LARGE\bfseries Appendix: PerFlow: Physics-Embedded Rectified Flow for Efficient Reconstruction and Uncertainty Quantification of Spatiotemporal Dynamics}
\end{center}
\vspace{20pt}

This appendix provides additional details, including proofs of the main theorems, training settings, baseline descriptions, dataset generation, data-usage studies, evaluation metrics, and supplementary results.

\section{Theoretical Results and Proofs}\label{sec:app_theory}

This appendix provides full statements and proofs for the constraint invariance results underlying PerFlow's hard-constraint embedding.
After discretization, many physics priors used in this work (e.g., incompressibility, Dirichlet boundary conditions, and global conservation) admit an affine linear form.
Let $\mathbf{x}\in\mathbb{R}^d$ denote the vectorized field (or a vectorized spatiotemporal block), and define the feasible set
\begin{equation}
\mathcal{S} \;=\; \{\mathbf{x}\in\mathbb{R}^d:\; A\mathbf{x}=\mathbf{p}\},
\label{eq:app_S}
\end{equation}
where $A$ stacks linear operators corresponding to the hard constraints and $\mathbf{p}$ encodes prescribed values.
PerFlow generates samples by integrating the conditional ODE
\begin{equation}
\frac{d\mathbf{x}}{dt}=\mathbf{v}_\theta(t,\mathbf{x},\mathbf{c}),\qquad t\in[0,1],
\label{eq:app_ode}
\end{equation}
where $\mathbf{c}$ is the observation condition.
A key design in PerFlow is to ensure (i) $\mathbf{x}(0)\in\mathcal{S}$ via the physics-embedded prior and (ii) the velocity field is constraint-preserving, formalized by the tangency condition
\begin{equation}
A\,\mathbf{v}_\theta(t,\mathbf{x},\mathbf{c})=\mathbf{0}
\quad \text{for all } t\in[0,1]\ \text{and all }\mathbf{x}\in\mathcal{S}.
\label{eq:app_tangent}
\end{equation}

\subsection{Constraint invariance in continuous time}

\begin{theorem}[Constraint invariance of continuous dynamics]
    \label{thm:app_continuous_invariance}
    Assume the feasible set is $\mathcal{S}=\{\mathbf{x}:A\mathbf{x}=\mathbf{p}\}$.
    Assume $\mathbf{v}_\theta(t,\mathbf{x},\mathbf{c})$ is continuous in $t$ and locally Lipschitz in $\mathbf{x}$ so that the ODE~\eqref{eq:app_ode} admits a unique solution on $[0,1]$.
    If (i) $\mathbf{x}(0)\in\mathcal{S}$ and (ii) the tangency condition~\eqref{eq:app_tangent} holds, then the ODE solution satisfies $\mathbf{x}(t)\in\mathcal{S}$ for all $t\in[0,1]$.
\end{theorem}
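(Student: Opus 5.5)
The plan is to compare the given trajectory with a second trajectory that is constrained to lie in $\mathcal{S}$ by construction, and then use uniqueness. The naive idea of differentiating $A\mathbf{x}(t)-\mathbf{p}$ does not work directly. It gives $\frac{d}{dt}(A\mathbf{x}(t)-\mathbf{p})=A\mathbf{v}_\theta(t,\mathbf{x}(t),\mathbf{c})$, and this vanishes only if we already know $\mathbf{x}(t)\in\mathcal{S}$, because the tangency condition~\eqref{eq:app_tangent} is assumed only on $\mathcal{S}$. That circularity is the main obstacle.

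To break it, let $\Pi$ be the orthogonal projector onto $\ker A$, and write $\mathcal{S}=\mathbf{x}(0)+\ker A$; this uses $\mathbf{x}(0)\in\mathcal{S}$, which also shows $\mathcal{S}\neq\emptyset$. Consider the auxiliary ODE
\begin{equation*}
\frac{d\mathbf{z}}{dt}=\Pi\,\mathbf{v}_\theta\bigl(t,\,\mathbf{x}(0)+\Pi(\mathbf{z}-\mathbf{x}(0)),\,\mathbf{c}\bigr),\qquad \mathbf{z}(0)=\mathbf{x}(0).
\end{equation*}
Its right-hand side is continuous in $t$ and locally Lipschitz in $\mathbf{z}$, since it is a composition of the Lipschitz affine map $\mathbf{z}\mapsto \mathbf{x}(0)+\Pi(\mathbf{z}-\mathbf{x}(0))$ with $\mathbf{v}_\theta$. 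Picard--Lindelöf therefore gives a local solution $\mathbf{z}$. Its velocity always lies in $\ker A$, so $A\mathbf{z}(t)=A\mathbf{x}(0)=\mathbf{p}$. Hence $\mathbf{z}(t)\in\mathcal{S}$, which gives $\Pi(\mathbf{z}-\mathbf{x}(0))=\mathbf{z}-\mathbf{x}(0)$, and the argument of $\mathbf{v}_\theta$ is simply $\mathbf{z}(t)$.

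Next I would remove the projector from the dynamics. At points of $\mathcal{S}$, tangency gives $\mathbf{v}_\theta(t,\mathbf{z},\mathbf{c})\in\ker A$, so $\Pi\mathbf{v}_\theta=\mathbf{v}_\theta$ there. Consequently $\mathbf{z}$ actually solves the original ODE~\eqref{eq:app_ode} with the same initial value. By the assumed uniqueness, $\mathbf{z}(t)=\mathbf{x}(t)$ on the common interval of existence, and so $\mathbf{x}(t)\in\mathcal{S}$ there.

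Finally I would make sure this covers all of $[0,1]$. Let $T^*=\sup\{\tau\le1:\mathbf{x}(t)\in\mathcal{S}\ \forall t\in[0,\tau]\}$. The set $\mathcal{S}$ is closed and $\mathbf{x}$ is continuous, so $\mathbf{x}(T^*)\in\mathcal{S}$. If $T^*<1$, restarting the construction above at $T^*$ yields membership in $\mathcal{S}$ on a small interval beyond $T^*$, contradicting the definition of $T^*$. Hence $T^*=1$.

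An alternative would be a Grönwall argument applied to $\mathrm{dist}(\mathbf{x}(t),\mathcal{S})$ using local Lipschitz continuity. The projected-ODE construction is cleaner, though, and it uses the hypotheses exactly as stated.
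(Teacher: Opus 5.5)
Your proof is correct and follows essentially the same route as the paper. Both arguments build an auxiliary trajectory confined to $\mathcal{S}$ by construction, use tangency to show it actually solves the original ODE, invoke uniqueness, and close with the supremum/continuation argument at $T^\star$; the only difference is cosmetic, in that the paper works in reduced coordinates $\mathbf{x}=\mathbf{x}_\mathrm{p}+N\mathbf{z}$ with a left-inverse $N^\dagger$ while you stay in ambient space with the orthogonal projector onto $\ker A$ (your projector is $NN^\dagger$ for the Moore--Penrose choice of $N^\dagger$).
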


\begin{proof}
    Let $\mathbf{x}(t)$ be the unique solution of~\eqref{eq:app_ode} on $[0,1]$.
    Define the constraint residual
    \[
    \mathbf{r}(t) \;=\; A\mathbf{x}(t)-\mathbf{p}.
    \]
    Since $\mathbf{x}(t)$ is differentiable, $\mathbf{r}(t)$ is differentiable and
    \begin{equation}
    \dot{\mathbf{r}}(t)
    = A\dot{\mathbf{x}}(t)
    = A\,\mathbf{v}_\theta(t,\mathbf{x}(t),\mathbf{c}).
    \label{eq:app_residual_derivative}
    \end{equation}
    The assumption $\mathbf{x}(0)\in\mathcal{S}$ implies $\mathbf{r}(0)=\mathbf{0}$.
    
    To prove $\mathbf{r}(t)\equiv\mathbf{0}$ on $[0,1]$, define the maximal time up to which the trajectory stays feasible:
    \[
    T^\star \,=\, \sup\{\,T\in[0,1]:\; \mathbf{x}(t)\in\mathcal{S}\ \text{for all }t\in[0,T] \,\}.
    \]
    Clearly $T^\star\ge 0$ since $\mathbf{x}(0)\in\mathcal{S}$.
    By continuity of $\mathbf{x}(t)$ and closedness of $\mathcal{S}$, we also have $\mathbf{x}(T^\star)\in\mathcal{S}$.
    
    We show that $T^\star=1$.
    Assume for contradiction that $T^\star<1$.
    Because $\mathcal{S}$ is an affine subspace, pick any particular point $\mathbf{x}_\mathrm{p}\in\mathcal{S}$ and let $N\in\mathbb{R}^{d\times q}$ be a full-column-rank matrix whose columns form a basis of $\ker(A)$, so that $AN=0$.
    Then every feasible state admits the parameterization
    \[
    \mathbf{x}\in\mathcal{S}\quad\Longleftrightarrow\quad \mathbf{x}=\mathbf{x}_\mathrm{p}+N\mathbf{z}\ \text{for some }\mathbf{z}\in\mathbb{R}^q.
    \]
    At time $T^\star$, since $\mathbf{x}(T^\star)\in\mathcal{S}$, there exists $\mathbf{z}^\star$ such that $\mathbf{x}(T^\star)=\mathbf{x}_\mathrm{p}+N\mathbf{z}^\star$.
    
    Now consider the vector field restricted to $\mathcal{S}$.
    For any $\mathbf{x}\in\mathcal{S}$, the tangency condition~\eqref{eq:app_tangent} implies
    $A\mathbf{v}_\theta(t,\mathbf{x},\mathbf{c})=0$, hence $\mathbf{v}_\theta(t,\mathbf{x},\mathbf{c})\in\ker(A)=\mathrm{range}(N)$.
    Fix any left-inverse $N^\dagger\in\mathbb{R}^{q\times d}$ satisfying $N^\dagger N=I_q$ (e.g., the Moore--Penrose pseudoinverse).
    Define the reduced vector field
    \[
    \mathbf{g}(t,\mathbf{z}) \,=\, N^\dagger\,\mathbf{v}_\theta\big(t,\mathbf{x}_\mathrm{p}+N\mathbf{z},\mathbf{c}\big).
    \]
    Because $\mathbf{v}_\theta$ is continuous in $t$ and locally Lipschitz in $\mathbf{x}$, the induced map $\mathbf{g}(t,\mathbf{z})$ is continuous in $t$ and locally Lipschitz in $\mathbf{z}$.
    Hence the reduced ODE
    \begin{equation}
    \frac{d\mathbf{z}}{dt}=\mathbf{g}(t,\mathbf{z}),\qquad \mathbf{z}(T^\star)=\mathbf{z}^\star,
    \label{eq:app_reduced_ode}
    \end{equation}
    admits a unique solution on $[T^\star,T^\star+\delta]$ for some $\delta>0$.
    Define $\tilde{\mathbf{x}}(t)=\mathbf{x}_\mathrm{p}+N\mathbf{z}(t)$ on this interval.
    Then $\tilde{\mathbf{x}}(t)\in\mathcal{S}$ for all $t\in[T^\star,T^\star+\delta]$, and
    \[
    \frac{d\tilde{\mathbf{x}}}{dt}
    = N\frac{d\mathbf{z}}{dt}
    = N\mathbf{g}(t,\mathbf{z}(t))
    = \mathbf{v}_\theta\big(t,\tilde{\mathbf{x}}(t),\mathbf{c}\big),
    \]
    where the last equality holds because $\mathbf{v}_\theta(t,\tilde{\mathbf{x}}(t),\mathbf{c})\in\mathrm{range}(N)$ and $NN^\dagger$ acts as the identity on $\mathrm{range}(N)$.
    Thus $\tilde{\mathbf{x}}(t)$ is a solution of the original ODE~\eqref{eq:app_ode} with the same initial condition $\tilde{\mathbf{x}}(T^\star)=\mathbf{x}(T^\star)$.
    By uniqueness, $\tilde{\mathbf{x}}(t)=\mathbf{x}(t)$ on $[T^\star,T^\star+\delta]$, implying $\mathbf{x}(t)\in\mathcal{S}$ beyond $T^\star$, which contradicts the definition of $T^\star$.
    Therefore $T^\star=1$ and $\mathbf{x}(t)\in\mathcal{S}$ for all $t\in[0,1]$.
\end{proof}

\section{Algorithmic Procedure}
\label{sec:app_algorithm}

\begin{algorithm}[H]
\caption{PerFlow Training and Inference}
\label{alg:app_perflow}
\small
\DontPrintSemicolon
\KwIn{Training fields $\mathbf{x}_1$; mask sampler $p(\mathcal{M})$; base distribution $p_0$; projection operator $\Pi(\cdot)$; number of samples $K$; ODE steps $N$.}
\KwOut{Reconstruction samples $\{\hat{\mathbf{x}}^{(k)}\}_{k=1}^{K}$.}

\BlankLine
\textbf{Training (conditional rectified-flow matching).}\;
\While{not converged}{
  Sample a minibatch of ground-truth fields $\mathbf{x}_1$\;
  Sample masks $\mathcal{M}\sim p(\mathcal{M})$ and set $\mathbf{y}\leftarrow \mathcal{M}\odot \mathbf{x}_1$\;
  Form the condition $\mathbf{c}\leftarrow \mathrm{concat}(\mathbf{y},\mathcal{M})$\;
  Sample $t\sim\mathcal{U}(0,1)$ and $\tilde{\mathbf{x}}_0\sim p_0$\;
  Apply prior projection: $\mathbf{x}_0\leftarrow \Pi(\tilde{\mathbf{x}}_0)$\;
  Compute $\mathbf{x}_t\leftarrow (1-t)\mathbf{x}_0+t\mathbf{x}_1$ and $\mathbf{v}^{\star}\leftarrow \mathbf{x}_1-\mathbf{x}_0$\;
  Predict $\hat{\mathbf{v}}\leftarrow \mathbf{v}_\theta(t,\mathbf{x}_t,\mathbf{c})$\;
  Update $\theta$ by minimizing $\|\hat{\mathbf{v}}-\mathbf{v}^{\star}\|_2^2$\;
}

\BlankLine
\textbf{Inference (few-step conditional ODE sampling).}\;
Given $(\mathbf{y},\mathcal{M})$, set $\mathbf{c}\leftarrow \mathrm{concat}(\mathbf{y},\mathcal{M})$\;
\For{$k \leftarrow 1$ \KwTo $K$}{
  Sample $\tilde{\mathbf{x}}_0^{(k)}\sim p_0$ and apply prior projection:
  $\mathbf{x}^{(k)}(0)\leftarrow \Pi(\tilde{\mathbf{x}}_0^{(k)})$\;
  Integrate $\frac{d\mathbf{x}^{(k)}}{dt}
  =\mathbf{v}_\theta(t,\mathbf{x}^{(k)},\mathbf{c})$ from $t=0$ to $1$ with $N$ steps\;
  Set $\hat{\mathbf{x}}^{(k)}\leftarrow \mathbf{x}^{(k)}(1)$\;
}
Return $\{\hat{\mathbf{x}}^{(k)}\}_{k=1}^{K}$ and ensemble statistics.\;
\end{algorithm}
\FloatBarrier

\section{Training Details}\label{sec:training_details}
All experiments were conducted on a single NVIDIA A100 (80 GB) GPU with an Intel Xeon Platinum 8380 CPU (2.30GHz, 64 cores).
Unless otherwise specified, we use a unified training recipe across benchmarks and only vary the case-dependent hyperparameters in Table~\ref{tab:train_hparams}.

\begin{table}[H]
    \centering
    \begin{tabular}{l|ccc}
    \toprule
    Case & Batch size & Num.\ epochs & Learning rate \\
    \midrule
    1D Burgers & 24  & 300 & $1\times10^{-4}$ \\
    2D Darcy   & 24  & 500 & $1\times10^{-4}$ \\
    2D Poisson & 24  & 500 & $1\times10^{-4}$ \\
    2D NS      & 10  & 800 & $1\times10^{-4}$ \\
    \bottomrule
    \end{tabular}
    \caption{Training hyperparameters for different cases.}
    \label{tab:train_hparams}
\end{table}

We optimize all models using AdamW with weight decay $1\times10^{-4}$.
We adopt a learning-rate schedule with 10-epoch warmup followed by cosine decay from $1\times10^{-4}$ to $6\times10^{-5}$.
We additionally maintain an exponential moving average (EMA) of parameters with decay rate $0.995$, and report results using the EMA weights for evaluation and sampling.
This shared setup yields stable optimization behavior across different PDE systems.

\section{Baseline Models}\label{sec:baseline_models}

\textbf{Deterministic surrogates (FNO/FFNO/U-Net).}
We use \textbf{FNO}~\cite{li2020fno}, \textbf{FFNO}~\cite{tran2023factorized}, and \textbf{U-Net}~\cite{Ronneberger2015unet} as deterministic \emph{surrogate solvers} that learn the forward operator on each benchmark.
Specifically, the surrogate learns the forward mapping between \emph{full fields}:
for \textbf{Poisson} and \textbf{Darcy}, it maps the PDE input field (e.g., forcing or coefficient/permeability) to the corresponding solution field;
for \textbf{Navier--Stokes}, it maps a past window to a future window (e.g., the first 5 frames $\rightarrow$ the next 5 frames);
for \textbf{Burgers}, it maps the initial condition to the subsequent trajectory.
To perform sparse reconstruction with these surrogates, we adopt test-time optimization. We randomly initialize the unknown full field(s), keep the surrogate parameters fixed, and optimize the input variable by minimizing a data-fidelity objective between the surrogate-predicted output and the sparse observations (optionally with a simple regularizer).
Unless otherwise specified, the reported runtime for these baselines includes the full test-time optimization procedure.

\textbf{Fourier Neural Operator (FNO)}~\cite{li2020fno}.
FNO learns a discretization-invariant operator by parameterizing global convolution kernels in the Fourier domain.
Each Fourier layer applies FFT, multiplies truncated Fourier modes with learned complex weights, and maps back via inverse FFT, combined with pointwise channel mixing.

\textbf{Factorized Fourier Neural Operator (FFNO)}~\cite{tran2023factorized}.
FFNO extends FNO by factorizing the spectral representation and improving residual connections, enabling deeper operator networks with improved efficiency.
In particular, it decomposes multi-dimensional spectral mixing into factorized operations, reducing the computational and memory cost while preserving global receptive fields.

\textbf{U-Net}~\cite{Ronneberger2015unet}.
We use a standard encoder--decoder U-Net with skip connections as a convolutional surrogate for field-to-field regression.
The encoder progressively downsamples the input to extract multi-scale features, while the decoder upsamples to recover spatial resolution.
Skip connections between mirrored encoder and decoder stages preserve fine-scale details and facilitate stable optimization, making U-Net a strong deterministic baseline for PDE field reconstruction.

\textbf{S$^3$GM}~\cite{li2024s3gm}.
S$^3$GM is a \emph{score-based generative model} that learns a generative prior over spatiotemporal fields to capture the underlying dynamics from training data.
For sparse reconstruction, it performs conditional sampling by constructing sparse observations and applying sampling-time guidance based on an observation-consistency (data-fidelity) loss, so that generated samples match the measured entries.
Following the official guided sampling procedure, we evaluate S$^3$GM under a practical few-step budget (50 steps) and a long-run setting using the number of sampling steps in the original paper (2000 steps), denoted as S$^3$GM-50 and S$^3$GM-2000, respectively.

\textbf{DiffusionPDE (DiffPDE)}~\cite{huang2024diffusionpde}.
DiffusionPDE is a diffusion-based framework for PDE-constrained generation and inverse problems that incorporates observation consistency and PDE constraints through gradient-based guidance during sampling.
At each denoising step, it computes gradients of a data-fidelity term (matching sparse observations) and a physics term (physical constraints) to update the sample.
We report results under two sampling budgets: 50 steps and 2000 steps (as used in the original paper), denoted as DiffPDE-50 and DiffPDE-2000, respectively.

\section{Dataset Information}\label{sec:dataset_info}

\paragraph{Static PDEs: Darcy and Poisson.}
We follow the data-generation procedures for static PDE benchmarks in~\cite{li2020fno,huang2024diffusionpde}.
We first sample a Gaussian random field (GRF) $\mu$ on $\Omega=(0,1)^2$ with zero mean and covariance operator
$\mu \sim \mathcal{N}\!\left(0,\;(-\Delta + 9I)^{-2}\right)$.
Unless otherwise specified, all fields are discretized on a $128\times128$ grid.

\paragraph{Darcy flow.}
We consider a static Darcy problem with homogeneous Dirichlet boundary conditions:
{\small
\begin{equation}
-\nabla\!\cdot\!\big(a(\mathbf{x})\,\nabla p(\mathbf{x})\big)=q(\mathbf{x}),\quad \mathbf{x}\in\Omega,
\qquad
p(\mathbf{x})=0,\quad \mathbf{x}\in\partial\Omega,
\label{eq:darcy}
\end{equation}
}
where $p$ denotes the pressure (hydraulic head) field and $a(\mathbf{x})$ is the permeability.
Following~\cite{huang2024diffusionpde}, we construct a binary permeability field by thresholding the GRF:
\begin{equation}
a(\mathbf{x}) =
\begin{cases}
12, & \mu(\mathbf{x}) \ge 0,\\
3,  & \mu(\mathbf{x}) < 0,
\end{cases}
\label{eq:darcy_binary_a}
\end{equation}
and set a constant source term $q(\mathbf{x})\equiv 1$.
We solve for $p$ using second-order finite differences on the grid.

\paragraph{Poisson equation.}
We consider the static Poisson equation with homogeneous Dirichlet boundary conditions:
\begin{equation}
\nabla^2 u(\mathbf{x}) = f(\mathbf{x}),\quad \mathbf{x}\in\Omega,
\qquad
u(\mathbf{x})=0,\quad \mathbf{x}\in\partial\Omega.
\label{eq:poisson}
\end{equation}
Following~\cite{huang2024diffusionpde}, we set the source term as the GRF sample $f(\mathbf{x})=\mu(\mathbf{x})$ and solve for $u$ using second-order finite differences.
To enforce the boundary condition exactly, we apply a smooth boundary-enforcing multiplier $\sin(\pi x_1)\sin(\pi x_2)$ (for $\mathbf{x}=(x_1,x_2)\in(0,1)^2$) to the resulting solution field, consistent with~\cite{huang2024diffusionpde}.
Both $f$ and $u$ have resolution $128\times128$.

\paragraph{Burgers' equation.}
We study the one-dimensional viscous Burgers' equation on $\Omega=(0,1)$ with periodic boundary conditions and viscosity $\nu=0.01$:
{\small
\begin{equation}
\frac{\partial u(x,t)}{\partial t}+\frac{\partial}{\partial x}\!\left(\frac{u(x,t)^2}{2}\right)
=\nu\,\frac{\partial^2 u(x,t)}{\partial x^2},
\quad x\in\Omega,\ t\in(0,T].
\label{eq:burgers}
\end{equation}
}
The initial condition is $u(x,0)=u_0(x)$.
Following~\cite{huang2024diffusionpde}, we sample $u_0$ from a Gaussian random field
$u_0 \sim \mathcal{N}\!\big(0,\;625(-\Delta+25I)^{-2}\big)$
and solve the PDE using a spectral method.
We simulate a horizon of $T=1$~s and record 128 snapshots (including $t{=}0$), resulting in a $128\times128$ time--space field $u_{0:T}$.

\paragraph{2D incompressible Navier--Stokes.}
We consider the two-dimensional incompressible Navier--Stokes equations on $\Omega=(0,1)^2$:
\begin{equation}
\frac{\partial \mathbf{u}}{\partial t}+(\mathbf{u}\cdot\nabla)\mathbf{u}
=\frac{1}{\mathrm{Re}}\nabla^2\mathbf{u}-\nabla p+\mathbf{f},
\qquad
\nabla\cdot\mathbf{u}=0,
\label{eq:ns2d}
\end{equation}
where $\mathbf{u}=(u,v)$ is the velocity field and $p$ is the scalar pressure.
We follow the generation procedure in~\cite{li2020fno}, the initial condition is sampled as a GRF in the vorticity/stream-function formulation, and a fixed forcing pattern is applied,
\begin{equation}
\begin{aligned}
q(\mathbf{x})&=\frac{1}{10}\big(\sin(2\pi(x_1+x_2))+\cos(2\pi(x_1+x_2))\big),\\
&\hfill \mathbf{x}=(x_1,x_2)\in\Omega.
\end{aligned}
\label{eq:ns_forcing}
\end{equation}

We set the Reynolds number to $\mathrm{Re}=1000$ and solve the dynamics using a pseudo-spectral method with Fourier transforms.
We simulate a horizon of $T=1$~s and record 10 snapshots over $[0,T]$.
In our experiments, the pseudo-spectral solver runs on a $256\times256$ grid, and we store the velocity fields $(u,v)$ after downsampling to a $64\times64$ grid.

\section{Data usage}\label{sec:data_usage}

\begin{table}[h]
    \centering
    \begin{tabular}{l|ccc|c}
    \toprule
    PDE & Train & Val & Test & Fields \\
    \midrule
    Burgers & 5000 & 1000 & 20 & $u_{0:T}$ \\
    Darcy   & 5000 & 1000 & 20 & $(a,\,p)$ \\
    Poisson & 5000 & 1000 & 20 & $(f,\,u)$ \\
    NS      & 5000 & 1000 & 20 & $(u,\,v)_{0:T}$ \\
    \bottomrule
    \end{tabular}
    \caption{Dataset splits for each benchmark.}
    \label{tab:data_splits}
\end{table}

Table~\ref{tab:data_splits} summarizes the dataset splits used in all experiments.
Unless otherwise specified, we train each model with 5000 training samples, select hyperparameters using 1000 validation samples, and report results on 20 held-out test samples.

\begin{figure}[h]
    \centering
    \includegraphics[width=8cm]{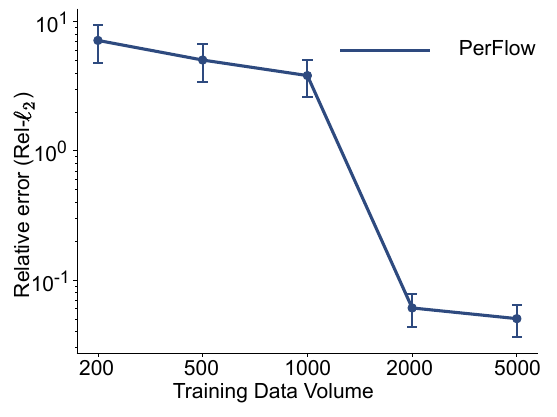}
    \caption{Poisson data-scaling study for PerFlow: test Rel-$\ell_2$ error versus the number of training samples.}
    \label{fig:poisson_data}
\end{figure}

To examine the data efficiency of PerFlow, we conduct an additional study on the Poisson benchmark by varying the number of training samples while keeping the validation/test sets and all training settings fixed.
Fig.~\ref{fig:poisson_data} reports the test Rel-$\ell_2$ error of PerFlow under varying training-set sizes (200/500/1000/2000/5000).
We observe that performance improves monotonically as more training data become available and begins to saturate around 5000 samples, indicating that the default training budget used in our main experiments is sufficient for learning a strong conditional prior.
With substantially fewer samples (e.g., 200--1000), the model exhibits noticeably larger errors, suggesting that limited data restrict the model's ability to capture the conditional distribution under sparse observations.

\section{Evaluation Metrics}\label{sec:eval_metrics}

We report three metrics: relative $\ell_2$ error (Rel-$\ell_2$), relative $\ell_1$ error (Rel-$\ell_1$), and a physics-based constraint error (Phys-Err).
Given ground truth $\mathbf{x}$ and reconstruction $\hat{\mathbf{x}}$, we define
\begin{equation}
\mathrm{Rel}\text{-}\ell_2(\hat{\mathbf{x}},\mathbf{x})
=\frac{\|\hat{\mathbf{x}}-\mathbf{x}\|_2}{\|\mathbf{x}\|_2},
\qquad
\mathrm{Rel}\text{-}\ell_1(\hat{\mathbf{x}},\mathbf{x})
=\frac{\|\hat{\mathbf{x}}-\mathbf{x}\|_1}{\|\mathbf{x}\|_1}.
\label{eq:rel_l2_l1}
\end{equation}
For benchmarks involving multiple target fields, we report component-averaged relative errors:
\begin{equation}
    \mathrm{Rel}\text{-}\ell_p
    =\frac{1}{m}\sum_{j=1}^{m}\mathrm{Rel}\text{-}\ell_p(\hat{\mathbf{x}}^{(j)},\mathbf{x}^{(j)}),
    \quad p\in\{1,2\}.
    \label{eq:avg_rel_lp}
\end{equation}
Concretely, we use: \textbf{Poisson}--average over $(u,f)$; \textbf{Darcy}--average over $(a,p)$; \textbf{Burgers}--single field $u$; \textbf{Navier--Stokes}--average over velocity components $(u,v)$.

Phys-Err measures violations of the \emph{hard physical constraints} used in our physics-embedding, rather than PDE residual losses.
Let $\partial\Omega$ denote the boundary grid and $|\partial\Omega|$ its number of grid points.
For Dirichlet constraints, we compute the mean squared boundary magnitude:
\begin{equation}
\mathrm{Phys\text{-}Err}_{\mathrm{bc}}(\hat{s})
=\frac{1}{|\partial\Omega|}\sum_{\mathbf{x}\in\partial\Omega}\big|\hat{s}(\mathbf{x})\big|^2,
\label{eq:phys_bc}
\end{equation}
where $\hat{s}$ is the predicted field subject to homogeneous Dirichlet boundary conditions.
Thus, for \textbf{Poisson} we apply Eq.~\eqref{eq:phys_bc} to $\hat{u}$, and for \textbf{Darcy} we apply it to $\hat{p}$.

For \textbf{Burgers}, we measure mass-conservation violation by the deviation of the spatial integral from its initial value:
\begin{equation}
\mathrm{Phys\text{-}Err}_{\mathrm{mass}}
=\frac{1}{N_t}\sum_{n=1}^{N_t}\left|\int_{\Omega}\hat{u}(x,t_n)\,dx-\int_{\Omega}u(x,0)\,dx\right|^2,
\label{eq:phys_mass}
\end{equation}
where $\{t_n\}_{n=1}^{N_t}$ are the recorded time steps. In our dataset $\int_{\Omega}u(x,0)\,dx=0$, so Eq.~\eqref{eq:phys_mass} reduces to averaging the squared mass over time.

For \textbf{Navier--Stokes}, we measure incompressibility by the mean squared divergence:
\begin{equation}
\mathrm{Phys\text{-}Err}_{\mathrm{div}}
=\frac{1}{|\mathcal{G}|}\sum_{(\mathbf{x},t)\in\mathcal{G}}
\big|\nabla\cdot\hat{\mathbf{u}}(\mathbf{x},t)\big|^2,
\label{eq:phys_div}
\end{equation}
where $\mathcal{G}$ denotes the spatiotemporal grid used for evaluation.
All spatial derivatives (e.g., $\nabla\cdot$) are computed with the same discretization as in our implementation.

\section{Comparison between PerFlow and Conditional FNO}\label{sec:fno_condition}
\begin{table*}[!t]
\centering
\begin{tabular}{ll|cc}
\toprule
PDE & Metric & Conditional FNO & PerFlow \\
\midrule
\multirow{4}{*}{Burgers}
& Rel-$\ell_2$ & $2.46\times10^{-1}$ & $8.63\times10^{-2}$ \\
& Rel-$\ell_1$ & $2.32\times10^{-1}$ & $6.84\times10^{-2}$ \\
& Phys-Err     & $1.75\times10^{-3}$ & $4.47\times10^{-11}$ \\
& Time (s)     & $0.044$              & $0.79$ \\
\midrule
\multirow{4}{*}{Poisson}
& Rel-$\ell_2$ & $6.65\times10^{-2}$ & $5.02\times10^{-2}$ \\
& Rel-$\ell_1$ & $6.20\times10^{-2}$ & $4.63\times10^{-2}$ \\
& Phys-Err     & $2.65\times10^{-2}$ & $0.00\times10^{0}$ \\
& Time (s)     & $0.062$              & $0.82$ \\
\midrule
\multirow{4}{*}{Darcy}
& Rel-$\ell_2$ & $1.08\times10^{-1}$ & $6.58\times10^{-2}$ \\
& Rel-$\ell_1$ & $3.32\times10^{-2}$ & $1.13\times10^{-2}$ \\
& Phys-Err     & $4.12\times10^{-2}$ & $1.05\times10^{-7}$ \\
& Time (s)     & $0.045$              & $0.81$ \\
\midrule
\multirow{4}{*}{NS}
& Rel-$\ell_2$ & $2.15\times10^{-2}$ & $2.22\times10^{-2}$ \\
& Rel-$\ell_1$ & $1.96\times10^{-2}$ & $1.76\times10^{-2}$ \\
& Phys-Err     & $2.21\times10^{-3}$ & $8.04\times10^{-10}$ \\
& Time (s)     & $0.014$              & $1.20$ \\
\bottomrule
\end{tabular}
\caption{Comparison between Conditional FNO and PerFlow on four PDE benchmarks.}
\label{tab:fno_cond_vs_perflow}
\end{table*}

Table~\ref{tab:fno_cond_vs_perflow} reports an additional comparison between conditional FNO and PerFlow. Conditional FNO directly learns a deterministic mapping from sparse observations to the full field, yielding very fast inference.
However, as a point-estimator, it does not model the posterior distribution under partial observations and thus cannot provide uncertainty quantification.
Moreover, its conditional regression does not explicitly enforce our hard physical constraints, which lead to noticeable constraint violations (Phys-Err), especially in ill-posed regimes.

\section{Additional Robustness Analyses}
\label{sec:app_robustness}

\subsection{Sensitivity to Base Distribution}

We further evaluate the sensitivity of PerFlow to the choice of the base distribution $p_0$.
Table~\ref{tab:base_distribution} compares Gaussian, Uniform, and Gaussian mixture model (GMM) priors on Burgers and Navier--Stokes.
Gaussian and GMM priors achieve similar performance, while Uniform is slightly weaker but remains competitive, suggesting that PerFlow is reasonably robust to the base prior.
\begin{table}[h]
    \centering
    \begin{tabular}{lccc}
        \toprule
        Dataset & Gaussian & Uniform & GMM \\
        \midrule
        Burgers & 6.84 & 9.17 & 6.73 \\
        NS      & 1.76 & 2.59 & 2.15 \\
        \bottomrule
    \end{tabular}
\caption{Relative $\ell_1$ error (\%) under different base distributions.}
\label{tab:base_distribution}
\end{table}

\subsection{Robustness to Noisy Sparse Observations}

We also evaluate PerFlow under noisy sparse observations by adding zero-mean Gaussian noise to the observed entries.
The noise scale is set relative to the standard deviation of the observed values.
As shown in Table~\ref{tab:noise_sparse_obs}, the reconstruction error increases smoothly as the noise level grows, indicating that PerFlow remains stable under moderate observation noise.
\begin{table}[h]
    \centering
    \begin{tabular}{lccc}
        \toprule
        Dataset & 1\% & 5\% & 10\% \\
        \midrule
        Poisson & 4.72 & 5.70 & 7.39 \\
        Burgers & 8.37 & 8.67 & 9.28 \\
        \bottomrule
    \end{tabular}
    \caption{Relative $\ell_1$ error (\%) under noisy sparse observations.}
    \label{tab:noise_sparse_obs}
\end{table}

\section{Supplementary Reconstruction Results with Long-run Baselines}
\label{sec:supp_main_results}

\begin{sidewaysfigure*}[t]
    \centering
    \includegraphics[width=\textheight]{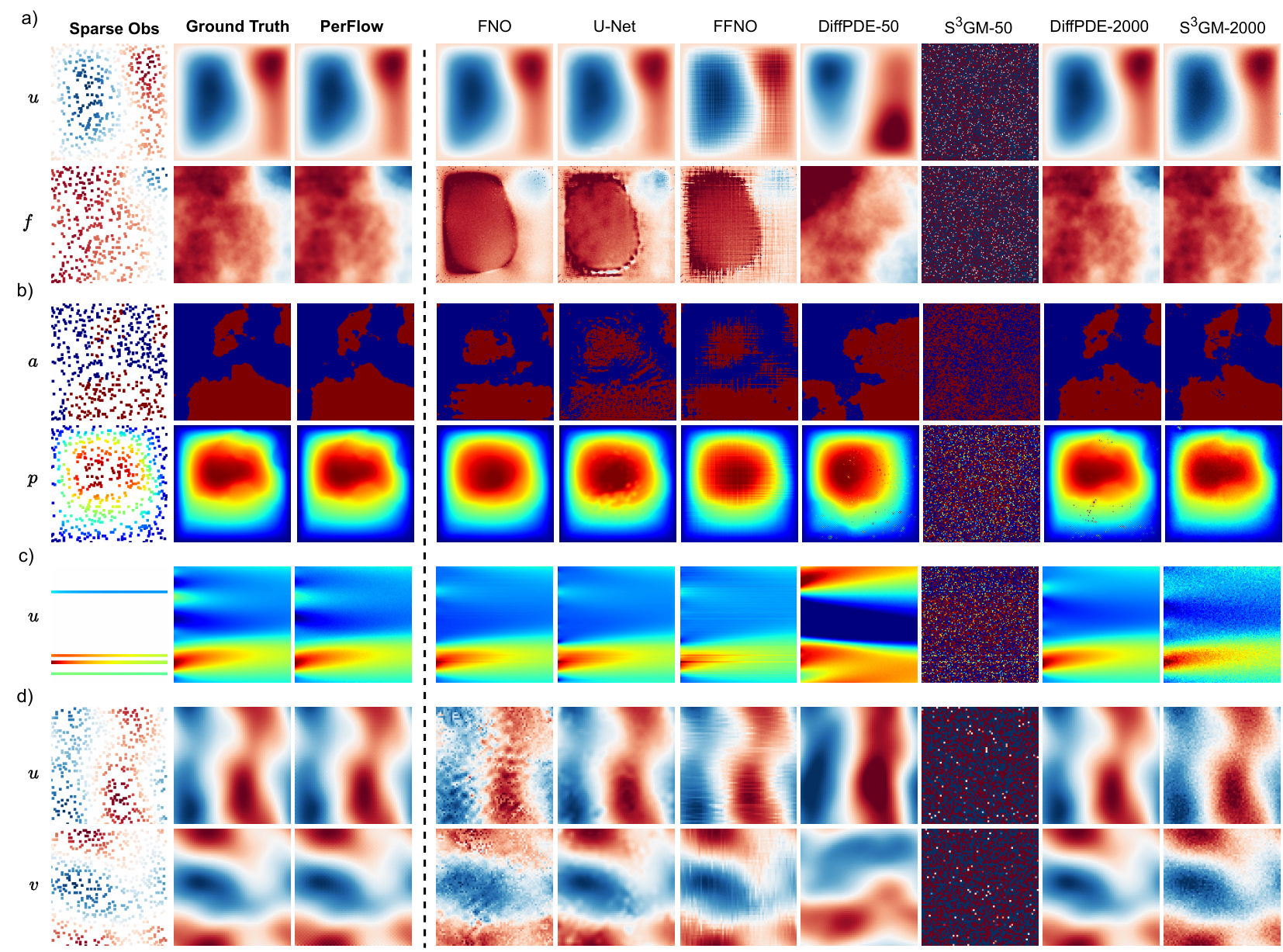}
    \caption{Sparse reconstruction on four PDE benchmarks: \textbf{(a)} Poisson (solution and forcing), \textbf{(b)} Darcy (coefficient and pressure), \textbf{(c)} Burgers, and \textbf{(d)} NS (velocity components $u$/$v$). We additionally include long-run guided sampling results for DiffPDE and S$^3$GM with 2000 steps (as used in the original papers).}
    \label{fig:main_appendix}
\end{sidewaysfigure*}

The main paper reports comparisons under a practical few-step budget (50 steps) for all generative methods.
For completeness, Fig.~\ref{fig:main_appendix} provides an enlarged version of the main qualitative results by additionally including DiffPDE-2000 and S$^3$GM-2000 (using the sampling step counts from the original papers).
This figure enables a direct visual comparison between PerFlow (50 steps) and long-run guided diffusion baselines under the same sparse observations.

\section{More Results under Varying Numbers of Observations}
\label{sec:more_obs_results}

\begin{sidewaysfigure*}[t]
    \centering
    \includegraphics[width=\textheight]{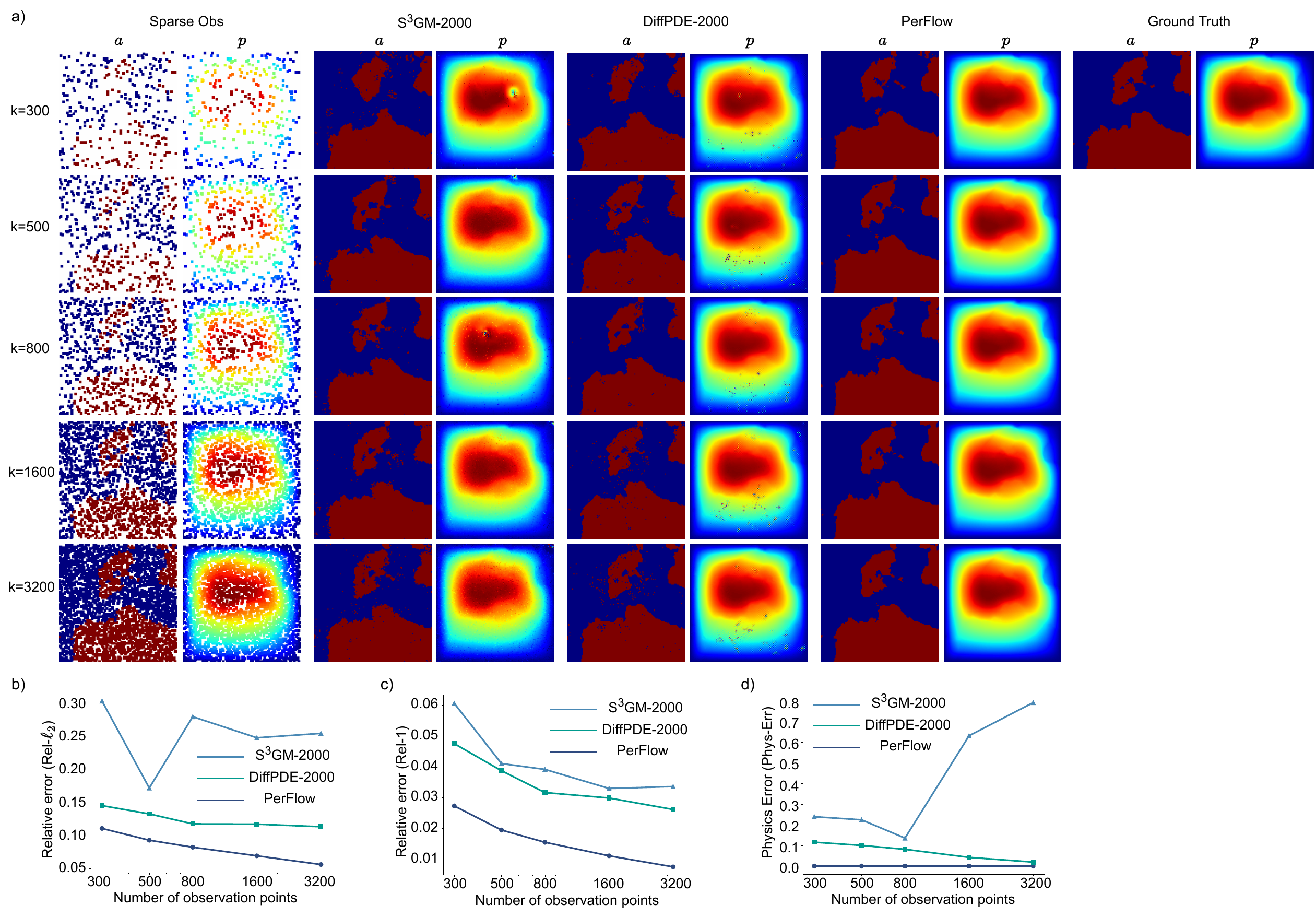}
    \caption{Darcy flow reconstruction under varying numbers of observations ($k\in\{300,500,800,1600,3200\}$) on a randomly selected test sample, comparing DiffPDE-2000, S$^3$GM-2000, and PerFlow.}
    \label{fig:obs}
\end{sidewaysfigure*}

Fig.~\ref{fig:obs} studies how reconstruction quality scales with the number of available measurements on the Darcy benchmark.
We randomly select one test sample and vary the number of observed points $k\in\{300,500,800,1600,3200\}$ (i.e., increasing observation ratios), while keeping all other settings unchanged.
We compare PerFlow against the guided generative baselines DiffPDE and S$^3$GM, both evaluated with 2000 sampling steps (as used in the original papers).
The figure includes a qualitative comparison between sparse observations, reconstructions, and the ground truth, as well as quantitative curves of Rel-$\ell_2$, Rel-$\ell_1$, and Phys-Err versus $k$.

As expected, all methods improve as more observations are provided.
However, PerFlow consistently achieves lower Rel-$\ell_2$ and Rel-$\ell_1$ errors across the full range of $k$, and simultaneously yields smaller Phys-Err, indicating better constraint satisfaction under sparse conditioning.
Notably, in the low-observation regime (e.g., $k\le 500$), guided diffusion baselines tend to exhibit larger errors and visible artifacts in unobserved regions, whereas PerFlow produces more coherent reconstructions that better preserve global structure.
With denser observations (e.g., $k\ge 1600$), the performance gap narrows, suggesting that the inverse problem becomes less ill-posed when sufficient measurements are available.
Overall, this experiment further supports that PerFlow provides a favorable accuracy--physics trade-off, particularly in the challenging sparse-measurement setting.

\end{document}